\newtheorem{definition}{\textbf{Definition}}
\newtheorem{proposition}{\textbf{Proposition}}
\newtheorem{lemma}{\textbf{Lemma}}
\title{\LARGE \bf
Symmetry-Guided Multi-Agent Inverse Reinforcement Learning
}
\author{Yongkai Tian$^{1}$,
Yirong Qi$^{1}$,
Xin Yu$^{1}$,
Wenjun Wu$^{1}$,
Jie Luo$^{1,*}$
\thanks{*Corresponding Author: Jie Luo {\tt\small luojie@buaa.edu.cn.}}
\thanks{$^{1}$State Key Laboratory of Complex \& Critical Software Environment, Beihang University, Beijing, China}%
}
\begin{document}

\maketitle
\thispagestyle{empty}
\pagestyle{empty}

\begin{abstract}
In robotic systems, the performance of reinforcement learning depends on the rationality of predefined reward functions. However, manually designed reward functions often lead to policy failures due to inaccuracies. Inverse Reinforcement Learning (IRL) addresses this problem by inferring implicit reward functions from expert demonstrations. Nevertheless, existing methods rely heavily on large amounts of expert demonstrations to accurately recover the reward function. The high cost of collecting expert demonstrations in robotic applications, particularly in multi-robot systems, severely hinders the practical deployment of IRL. Consequently, improving sample efficiency has emerged as a critical challenge in multi-agent inverse reinforcement learning (MIRL). Inspired by the symmetry inherent in multi-agent systems, this work theoretically demonstrates that leveraging symmetry enables the recovery of more accurate reward functions. Building upon this insight, we propose a universal framework that integrates symmetry into existing multi-agent adversarial IRL algorithms, thereby significantly enhancing sample efficiency. Experimental results from multiple challenging tasks have demonstrated the effectiveness of this framework. Further validation in physical multi-robot systems has shown the practicality of our method. 
\end{abstract}


\section{Introduction}
In recent years, Multi-Agent Reinforcement Learning (MARL) has made notable achievements in multi-robot systems \cite{feng2023mact, feng2024hierarchical, gu2023safe, liao2025sigma}. The efficacy of MARL primarily arises from its ability to optimize manually designed reward functions. However, this also highlights a critical challenge: devising suitable reward functions for complex and poorly defined tasks is often impractical \cite{hadfield2017inverse, amodei2016concrete}. Furthermore, improperly designed reward functions may result in undesirable agent behavior. To mitigate these problems, Multi-Agent Imitation Learning has emerged as a prevalent algorithm \cite{natarajan2010multi, lin2017multiagent, ravichandar2020recent}, capable of directly learning the policy from expert demonstrations, thus avoiding the need for a detailed reward function design.


\begin{figure}[thpb]
  \centering
  \includegraphics[width=\columnwidth]{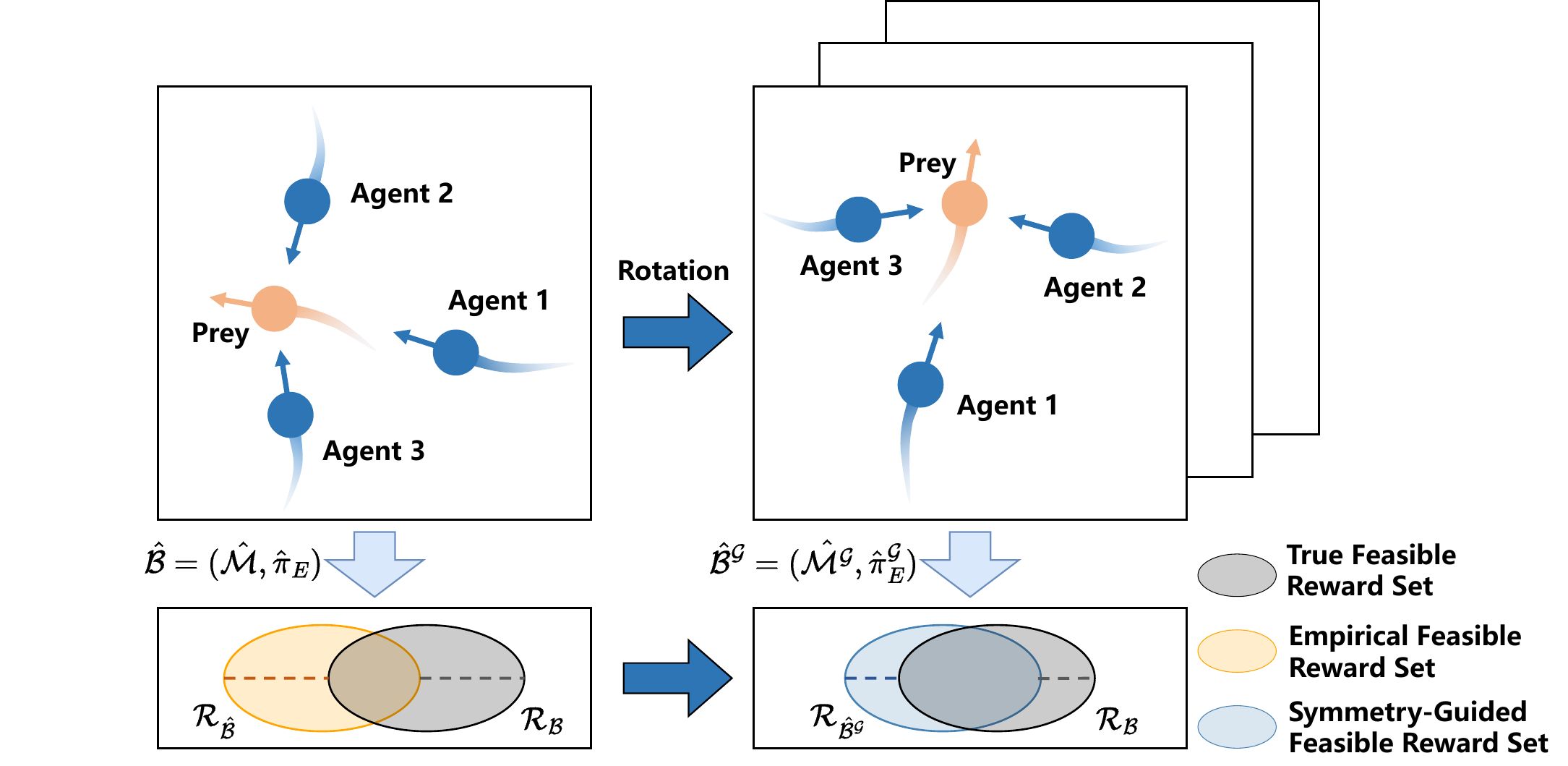}
  \caption{Multi-agent inverse reinforcement learning with symmetry. When the global state undergoes rotation, the agents' actions correspondingly rotate. This property can be utilized to recover a more accurate feasible reward set.}
  \label{symmetry}
\end{figure}

In the field of imitation learning, two predominant methodologies are employed: Behavior Cloning (BC) \cite{pomerleau1991efficient} and Inverse Reinforcement Learning (IRL) \cite{ng2000algorithms}. BC leverages supervised learning to derive policies from expert state-action pairs. Although BC has achieved success in some simple scenarios, it suffers from compounded errors caused by the covariate shift \cite{ross2010efficient, ross2011reduction}. On the other hand, IRL recovers the implicit reward function from expert demonstrations and then uses this reward function to train the policy, allowing it to be applied to more complex tasks. However, one significant challenge it faces is the need for large-scale and diverse expert demonstrations, leading to sample inefficiency problems \cite{zare2024survey}. In multi-robot systems, due to the increased complexity of the system, the higher cost of collecting expert demonstrations and the need for more expert demonstrations to train the model make the problem of sample inefficiency even more pronounced.

Addressing the challenge of sample inefficiency in Multi-Agent IRL (MIRL) fundamentally involves reducing the reliance on expert demonstrations and recovering more accurate reward functions from fewer samples. Utilizing prior knowledge is an effective way that allows neural networks to learn more general knowledge from limited datasets, thereby significantly improving sample efficiency \cite{karniadakis2021physics}. Multi-agent systems (MAS) commonly exhibit properties known as symmetry, including equivariance and invariance. For example, in Pursuit task depicted in Fig. \ref{symmetry}, when the environment state rotates, the agent's actions rotate accordingly, demonstrating equivariance. In contrast, metrics such as the distance between agents remain unchanged, exhibiting invariance. Symmetry not only serves as a fundamental property of MAS but also plays a crucial role in multi-agent learning. However, its application in MIRL has not been widely explored. While symmetry has been demonstrated to improve algorithmic performance in the context of MARL \cite{van2021multi,yu2023esp, tian2024exploiting}, its theoretical basis does not apply to MIRL. 

In this paper, we present the first proof that leveraging the symmetry of MAS can recover more accurate reward functions, bridging the theoretical gap in the performance improvement of MIRL using symmetry. Based on this theory, we propose a novel Symmetry-Guided Framework (SGF) for multi-agent adversarial IRL algorithms, aimed at improving the sample efficiency. This framework comprises two primary modules: a Symmetry-Guided Demonstration Augmenter (SGDA) and a Symmetry-Aware Discriminator (SAD). The SGDA performs geometric transformations on expert and generated demonstrations to expand the dataset. The SAD is used to enhance the ability to recognize symmetric data. The key contributions of this paper can be summarized as follows:
\begin{itemize}
    \item We provide theoretical proof that leveraging symmetry can recover better reward functions.
    \item We propose a Symmetry-Guided Framework for multi-agent adversarial IRL, including the Symmetry-Guided Demonstration Augmenter and Symmetry-Aware Discriminator.
    \item We validate our method across multiple multi-agent tasks, demonstrating its ability to learn effective policies from fewer expert demonstrations.
    \item We deploy our method in a physical multi-robot environment, confirming its effectiveness in the real world.
\end{itemize}

\section{Related Work}
\subsection{Multi-Agent Adversarial Inverse Reinforcement Learning}
Multi-agent generative adversarial imitation learning (MA-GAIL) \cite{song2018multi} employs a mini-max objective to train agents by alternating between optimizing discriminators $D_1, \ldots, D_N$ and policies $\bm{\pi}$. These discriminators discern between agent-generated and expert-generated state-action pairs, with agent policies refined via MARL using rewards derived from the discriminators. Multi-agent Adversarial Inverse Reinforcement Learning (MA-AIRL) \cite{yu2019multi} modifies the discriminators inspired by the logistic stochastic best response equilibrium (LSBRE). Additionally, MA-AIRL defines reward functions as $\log D_i(s, a_i, s^{\prime}) - \log(1 - D_i(s, a_i, s^{\prime}))$, effectively capturing the reward dynamics leading to expert behaviors. Numerous subsequent works are proposed based on these two algorithms. \cite{jeon2020scalable} applies an attention mechanism to the generator to enhance the scalability of MA-GAIL. \cite{yu2021swarm} extends AIRL to a multi-agent framework using parameter sharing and explores its application in the biological domain. \cite{sengadu2023dec} adapts AIRL to the distributed domain.

\subsection{Symmetry in Multi-Agent Learning}
Recent studies have validated the effectiveness of applying symmetry in MARL. \cite{yu2023esp} utilized rotational symmetry for data augmentation and symmetric consistency loss to improve sample efficiency in MARL. \cite{yu2024leveraging} extended \cite{yu2023esp} to include scenarios of imperfect symmetry. Moreover, \cite{10611035} introduced an adaptive data augmentation method to boost sample efficiency. \cite{jianye2022boosting} and \cite{li2021permutation} developed policy networks that ensure permutation invariance, while \cite{van2021multi} designed policy networks with rotational symmetry. \cite{tian2024exploiting} leveraged the hierarchical symmetry of MAS to improve algorithm performance. Although the effectiveness of symmetry has been validated in MARL, its theory does not apply to MIRL. Only \cite{xu2023face} incorporates symmetry into the GAN framework similar to ours; however, it is not a MIRL algorithm. The application of symmetry in MIRL has not been deeply explored. This paper is the first to leverage symmetry to improve the sample efficiency of MIRL algorithms and provides theoretical proof for this method.


\section{Preliminaries}

\subsection{Equivariance and Invariance}
In MAS, symmetry often appears as equivariance and invariance to transformations. Let $L_g: X \to X$ represent a transformation corresponding to an element $g$ of some abstract group $\mathcal{G}$. A function $f: X \to Y$ exhibits equivariance to the transformation $g$ if there exists a related transformation $K_g: Y \to Y$ in the output space that satisfies:
\[
f(L_g(\mathbf{x})) = K_g(f(\mathbf{x})).
\]
Furthermore, the function $f$ is considered invariant for $g$ if it meets the following condition:
\[
f(L_g(\mathbf{x})) = f(\mathbf{x}),
\]
indicating that the output of $f$ does not change despite applying $L_g$.


\subsection{Markov Game}
In this paper, we model the MAS as a Markov Game \cite{littman1994markov} without reward (MG\textbackslash R), which is defined by the tuple $ \mathcal{M} = (\mathcal{N}, \mathcal{S}, \{\mathcal{A}_i\}_{i=1}^N, P, \gamma, \bm{\pi}) $. $\mathcal{N}$ is the set of agents and $ \mathcal{S} $ denotes the state space. Each agent $ i $ has an associated action space $ \mathcal{A}_i $, with the joint action space being the Cartesian product $ \mathcal{A} = \mathcal{A}_1 \times \mathcal{A}_2 \times \cdots \times \mathcal{A}_N $. The state transition function $ P: \mathcal{S} \times \mathcal{A} \times \mathcal{S} \rightarrow [0, 1] $ defines the probability distribution over the next state given the current state and joint action. $\gamma$ is the discount factor. $\bm{\pi}(\bm{a}|s)=\prod_{i=1}^N \pi_i(a_i|s)$ represents the joint policy, where we use bold variables without subscript $i$ to denote the concatenation of all agents ($e.g.$ $\bm{\pi}$ denotes the joint policy and $\bm{a}$ denotes the actions of all agents). Given an MG\textbackslash R $ \mathcal{M} $ and a reward function $r \in \mathbb{R}^{\mathcal{S} \times \mathcal{A}} $, we use $ \mathcal{M} \cup r $ to denote the MG obtained by pairing $ \mathcal{M} $ and $r$.

\section{Symmetry-Guided Multi-Agent Inverse Reinforcement Learning}
\label{sec4}
\subsection{Problem Definition}
To avoid reliance on a specific MIRL algorithm, we discuss the set of all feasible reward functions for the expert's policy. The MIRL problem is defined by a tuple $\mathcal{B} = (\mathcal{M}, \bm{\pi}_E)$, where $\mathcal{M}$ is an MG\textbackslash R and $\bm{\pi}_E$ is an expert joint policy. Essentially, the MIRL problem involves recovering a reward function $r \in \mathbb{R}^{\mathcal{S} \times \mathcal{A}}$ such that the expert joint policy $\bm{\pi}_E$ is optimal for $\mathcal{M} \cup r$. Typically, the reward function that satisfies this condition is not unique. The collection of all such reward functions that meet this criterion is referred to as the feasible reward set, denoted as $\mathcal{R}_\mathcal{B}$ for the MIRL problem $\mathcal{B}$. In \cite{metelli2021provably}, the feasible reward set for the single-agent IRL problem is characterized. Here, we extend analogous findings to the multi-agent context. 
\begin{lemma}[Feasible Reward Set]
\label{lemma1}
\textit{A reward function $r$ is feasible for $\mathcal{B}$ if and only if there exists $\zeta \in \mathbb{R}^{\mathcal{S} \times \mathcal{A}}_{\geq 0}$ and $V\in \mathbb{R}^{\mathcal{S}}$ such that for all $s$, $\bm{a}$ it holds:
\begin{equation*}
    r(s, \bm{a})=-\zeta\mathbb{I}_{\{\bm{\pi}_E(\bm{a}|s)=0\}}+V(s)-\gamma\sum\nolimits_{s^\prime} P(s^\prime|s, \bm{a})V(s^\prime),
\end{equation*}
where $\mathbb{I}$ denotes the indicator function, which takes the value 1 when $\bm{\pi}_E(\bm{a}|s) = 0$, and 0 otherwise.}
\end{lemma}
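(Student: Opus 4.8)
The plan is to reduce the multi-agent problem to a single-agent MDP\textbackslash R whose action space is the joint action space $\mathcal{A}$, treating the shared reward $r(s,\bm{a})$ and the expert joint policy $\bm{\pi}_E(\bm{a}\,|\,s)$ as the reward and policy of this ``team'' MDP. The justification for the reduction is that, under a common reward, the optimal value attainable by any joint policy is already attained by a deterministic joint policy, which is automatically a product policy; hence optimality of $\bm{\pi}_E$ within the factorized class coincides with ordinary single-agent Bellman optimality over $\mathcal{A}$. With this reduction in hand, I would invoke the standard optimality characterization: letting $V^\star$ be the optimal value function of $\mathcal{M}\cup r$ and $Q^\star(s,\bm{a}) = r(s,\bm{a}) + \gamma\sum_{s'}P(s'\,|\,s,\bm{a})V^\star(s')$, optimality of $\bm{\pi}_E$ is equivalent to $Q^\star(s,\bm{a}) = V^\star(s)$ on the support of $\bm{\pi}_E$ and $Q^\star(s,\bm{a}) \le V^\star(s)$ off the support.

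For the forward direction I would set $V = V^\star$ and define the nonnegative slack $\zeta(s,\bm{a}) = V^\star(s) - Q^\star(s,\bm{a}) \ge 0$ for joint actions off the support (its value on the support is irrelevant, since the indicator annihilates the term there). Rearranging the definition of $Q^\star$ and substituting then yields exactly $r(s,\bm{a}) = -\zeta\,\mathbb{I}_{\{\bm{\pi}_E(\bm{a}|s)=0\}} + V(s) - \gamma\sum_{s'}P(s'\,|\,s,\bm{a})V(s')$, establishing necessity.

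For the converse, given $r$ of the stated form with some $\zeta\ge 0$ and $V$, I would define $Q(s,\bm{a}) = r(s,\bm{a}) + \gamma\sum_{s'}P(s'\,|\,s,\bm{a})V(s')$ and verify by direct substitution that $Q(s,\bm{a}) = V(s)$ on the support of $\bm{\pi}_E$ and $Q(s,\bm{a}) = V(s) - \zeta(s,\bm{a}) \le V(s)$ off it. This gives $\max_{\bm{a}}Q(s,\bm{a}) = V(s)$, so $V$ solves the Bellman optimality equation and thus $V = V^\star$; since $\bm{\pi}_E$ places mass only on $Q^\star$-maximizing joint actions, it is optimal and $r$ is feasible.

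I expect the main obstacle to be the reduction step rather than the algebra. The delicate points are arguing that optimality of a \emph{factorized} joint policy over the restricted product class is equivalent to single-agent optimality over the full joint action space, and checking that the support of $\bm{\pi}_E$ is precisely the product of the individual supports, so that the indicator $\mathbb{I}_{\{\bm{\pi}_E(\bm{a}|s)=0\}}$ correctly encodes which constraints are active equalities and which are slack inequalities. Once these structural facts are settled, the remaining computations mirror the single-agent characterization of \cite{metelli2021provably} almost verbatim.
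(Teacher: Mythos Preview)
Your proposal is correct and follows essentially the same route as the paper: the paper also characterizes feasibility via the Bellman conditions $Q^{\bm{\pi}_E}(s,\bm{a}) = V^{\bm{\pi}_E}(s)$ on the support of $\bm{\pi}_E$ and $\le$ off it, packages this as an auxiliary lemma stating $Q^{\bm{\pi}_E}(s,\bm{a}) = -\zeta\,\mathbb{I}_{\{\bm{\pi}_E(\bm{a}|s)=0\}} + V(s)$, and then reads off $r = Q - \gamma P V$ exactly as you do. The paper does not address the factorized-versus-full joint-policy subtlety you flag---it simply treats $\bm{\pi}_E$ as a single policy on the joint action space from the outset---so your discussion of the reduction step is, if anything, more careful than the paper's own argument.
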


Here, the first term depends only on the expert joint policy $\bm{\pi}_E$ and does not depend on the MG. The other parameter can be interpreted as reward-shaping through the value function $V$.

\subsection{Error Propagation}
Next, we study the error propagation. In practice, IRL is trained using a transition tuples dataset $\tau_E = \{(s_j, \bm{a}_j, s^\prime_j)\}_{j=1}^M$ collected by the expert policy, where $s^\prime_j \sim P(\cdot | s_j, \bm{a}_j)$, $\bm{a}_j \sim \bm{\pi}_E(\cdot | s_j)$ and $M$ is the dataset size. Given $\tau_E$, we can get the empirical transition model $\hat{P}$ and the empirical expert joint policy $\hat{\bm{\pi}}_E$, thereby obtaining an approximate version $\hat{\mathcal{B}} = (\hat{\mathcal{M}}, \hat{\bm{\pi}}_E)$ of the MIRL problem $\mathcal{B}$. Here, $\hat{\mathcal{M}}$ denotes the empirical MG\textbackslash R. The feasible reward set $\mathcal{R}_{\hat{\mathcal{B}}}$ obtained through $\hat{\mathcal{B}}$ is essentially an estimate of the true feasible reward set $\mathcal{R}_\mathcal{B}$, and there is inevitably some error between them. We extend the results of \cite{metelli2021provably} to the multi-agent context.

\begin{proposition}[Error Propagation]
\label{proposition1}
\textit{Given two MIRL problems $\mathcal{B}=(\mathcal{M}, \bm{\pi}_E)$ and $\hat{\mathcal{B}}=(\hat{\mathcal{M}}, \hat{\bm{\pi}}_E)$, for any $r \in \mathcal{R}_\mathcal{B}$ there exists $\hat{r} \in \mathcal{R}_{\hat{\mathcal{B}}}$ such that:
\begin{equation*}
\begin{aligned}
    |r(s,\bm{a})-\hat{r}(s,\bm{a})| \leq & 
    \zeta \mathbb{I}_{\{\bm{\pi}_E(a|s)=0\}}\mathbb{I}_{\{\hat{\bm{\pi}}_E(a|s) > 0\}} + \\ &\gamma\sum_{s^\prime} |V(s^\prime)(P(s^\prime|s,a)-\hat{P}(s^\prime|s,a))|.
\end{aligned}
\end{equation*}}
\end{proposition}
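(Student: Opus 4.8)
The plan is to leverage the explicit parameterization from Lemma~\ref{lemma1} and to construct a matching feasible reward $\hat r$ for $\hat{\mathcal{B}}$ that reuses the \emph{same} value function $V$ as $r$, while selecting its penalty term so as to absorb the support mismatch between $\bm{\pi}_E$ and $\hat{\bm{\pi}}_E$. Since $r\in\mathcal{R}_\mathcal{B}$, Lemma~\ref{lemma1} provides $\zeta\ge 0$ and $V$ with
\begin{equation*}
    r(s,\bm{a})=-\zeta(s,\bm{a})\mathbb{I}_{\{\bm{\pi}_E(\bm{a}|s)=0\}}+V(s)-\gamma\sum\nolimits_{s'}P(s'|s,\bm{a})V(s').
\end{equation*}
I would define $\hat r$ by the analogous formula for $\hat{\mathcal{B}}$, using $\hat P$ and $\hat{\bm{\pi}}_E$ in place of $P$ and $\bm{\pi}_E$, the same $V$, and the new penalty $\hat\zeta(s,\bm{a}):=\zeta(s,\bm{a})\mathbb{I}_{\{\bm{\pi}_E(\bm{a}|s)=0\}}$. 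As $\hat\zeta\ge 0$, Lemma~\ref{lemma1} applied to $\hat{\mathcal{B}}$ certifies that $\hat r\in\mathcal{R}_{\hat{\mathcal{B}}}$, so this $\hat r$ is an admissible witness for the existential claim.

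Next I would subtract the two expressions. The shaping terms $V(s)$ cancel identically, so the discrepancy decomposes into a penalty part $T:=-\zeta(s,\bm{a})\mathbb{I}_{\{\bm{\pi}_E(\bm{a}|s)=0\}}+\hat\zeta(s,\bm{a})\mathbb{I}_{\{\hat{\bm{\pi}}_E(\bm{a}|s)=0\}}$ and a transition-mismatch part $-\gamma\sum_{s'}(P(s'|s,\bm{a})-\hat P(s'|s,\bm{a}))V(s')$. The triangle inequality bounds the latter by $\gamma\sum_{s'}|V(s')(P(s'|s,\bm{a})-\hat P(s'|s,\bm{a}))|$, which is precisely the second summand in the claimed inequality, so the entire burden falls on controlling $T$.

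For the penalty term I would substitute the definition of $\hat\zeta$ to obtain $T=\zeta(s,\bm{a})\mathbb{I}_{\{\bm{\pi}_E(\bm{a}|s)=0\}}\big(\mathbb{I}_{\{\hat{\bm{\pi}}_E(\bm{a}|s)=0\}}-1\big)$, and then use $\mathbb{I}_{\{\hat{\bm{\pi}}_E(\bm{a}|s)=0\}}-1=-\mathbb{I}_{\{\hat{\bm{\pi}}_E(\bm{a}|s)>0\}}$ to conclude $|T|=\zeta(s,\bm{a})\mathbb{I}_{\{\bm{\pi}_E(\bm{a}|s)=0\}}\mathbb{I}_{\{\hat{\bm{\pi}}_E(\bm{a}|s)>0\}}$, matching the first summand. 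Equivalently, a four-way case split on whether $\bm{\pi}_E(\bm{a}|s)$ and $\hat{\bm{\pi}}_E(\bm{a}|s)$ vanish shows that $T$ is zero except when $\bm{\pi}_E(\bm{a}|s)=0$ yet $\hat{\bm{\pi}}_E(\bm{a}|s)>0$.

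I expect the main obstacle to be noticing that the existential quantifier over $\hat r$ must be exploited to eliminate the \emph{reverse} mismatch case, where $\hat{\bm{\pi}}_E(\bm{a}|s)=0$ but $\bm{\pi}_E(\bm{a}|s)>0$. A naive choice that copies $\zeta$ verbatim into $\hat r$ would leave a spurious $\zeta\,\mathbb{I}_{\{\hat{\bm{\pi}}_E(\bm{a}|s)=0\}}\mathbb{I}_{\{\bm{\pi}_E(\bm{a}|s)>0\}}$ contribution absent from the stated one-sided bound; forcing $\hat\zeta$ to vanish wherever $\bm{\pi}_E(\bm{a}|s)>0$ is exactly what suppresses it and yields the asymmetric estimate.
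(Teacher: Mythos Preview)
Your proposal is correct and follows essentially the same route as the paper: invoke Lemma~\ref{lemma1}, choose $\hat V=V$ and $\hat\zeta=\zeta\,\mathbb{I}_{\{\bm{\pi}_E(\bm{a}|s)=0\}}$, subtract, and apply the triangle inequality. Your derivation of the penalty term via $\mathbb{I}_{\{\hat{\bm{\pi}}_E=0\}}-1=-\mathbb{I}_{\{\hat{\bm{\pi}}_E>0\}}$ and your remark on why this particular $\hat\zeta$ suppresses the reverse-mismatch contribution are in fact more explicit than the paper's own argument.
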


\subsection{Symmetry-Guided MIRL}

Our goal is to design an algorithm that leverages the symmetry of MAS to improve the sample efficiency of MIRL algorithms and recover more accurate reward functions. The symmetry of MAS can be modeled through MG. Based on MG\textbackslash R, we propose the \textit{$\mathcal{G}$-invariant Markov Game without reward} ($\mathcal{G}$-invariant MG\textbackslash R) for MIRL.

\begin{definition}[$\mathcal{G}$-invariant MG\textbackslash R]
\textit{An MG\textbackslash R is said to be $\mathcal{G}$-invariant if $\forall g \in \mathcal{G}$, it satisfies the following equations:
\begin{align*}
    \pi_i(a_i|s) &= \pi_i(K_g[a_i]|L_g[s]), \\
    P(s^{\prime}|s,\bm{a}) &= P(L_g[s^{\prime}]|L_g[s],K_g[\bm{a}]),
\end{align*}
where $L_g$ and $K_g$ represent the transformations in state space and action space, respectively.}
\end{definition}

This paper aims to improve sample efficiency by designing algorithms based on the properties of $\mathcal{G}$-invariant MG\textbackslash R. The equivalent perspective of recovering the reward function from expert demonstrations $\tau_E=\{(s_j,\bm{a}_j,s^\prime_j)\}_{j=1}^M$ based on $\mathcal{G}$-invariant MG\textbackslash R is to recover the reward function from an augmented expert demonstration $\tau_E^\mathcal{G}=\{(gs_j,g\bm{a},gs^\prime_j)|g\in\mathcal{G},(s_j,\bm{a}_j,s_j)\in\tau_E\}_{j=1}^M$. Then, we can formally define the Symmetry-Guided MIRL problem as follow.

\begin{definition}[Symmetry-Guided MIRL Problem]
\label{definition2}
\textit{Given an empirical MIRL problem $\hat{\mathcal{B}}=(\hat{\mathcal{M}}, \hat{\bm{\pi}}_E)$, a Symmetry-Guided MIRL problem is a pair $\hat{\mathcal{B}}^\mathcal{G}=(\hat{\mathcal{M}}^\mathcal{G}, \hat{\bm{\pi}}_E^{\mathcal{G}})$, where $\hat{\mathcal{M}}^\mathcal{G}$ is the empirical $\mathcal{G}$-invariant MG\textbackslash R and $\hat{\bm{\pi}}_E^{\mathcal{G}}$ is the empirical expert joint policy, which are formed from the augmented expert demonstrations $\tau_E^{\mathcal{G}}$. The recovered reward set through $\hat{\mathcal{B}}^\mathcal{G}$ is denoted as $\mathcal{R}_{\hat{\mathcal{B}}^\mathcal{G}}$.}
\end{definition}

Furthermore, by combining Proposition \ref{proposition1} with Definition \ref{definition2}, we can conclude that the Symmetry-Guided MIRL problem can recover a more accurate reward function, as shown in Fig. \ref{symmetry}. We formally state this as the following proposition.

\begin{proposition}[Symmetry-Guided MIRL Improvement]
\label{proposition2}
\textit{Given the empirical MIRL problem $\hat{\mathcal{B}}=(\hat{\mathcal{M}}, \hat{\bm{\pi}}_E)$ and $\hat{B}^\mathcal{G}=(\hat{\mathcal{M}}^\mathcal{G}, \hat{\bm{\pi}}_E^\mathcal{G})$, the upper bound of error propagation satisfies the following relation:
\begin{equation*}
    |r(s,\bm{a})-\hat{r}(s, \bm{a})|^U \geq |r(s, \bm{a}) - \hat{r}^\mathcal{G}(s, \bm{a})|^U,
\end{equation*}
where $\hat{r}^\mathcal{G} \in \mathcal{R}_{\hat{\mathcal{B}}^\mathcal{G}}$ and $U$ denotes upper bound.}
\end{proposition}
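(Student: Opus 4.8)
The plan is to start from the error-propagation bound of Proposition~\ref{proposition1}, applied separately to the two empirical problems $\hat{\mathcal{B}}$ and $\hat{\mathcal{B}}^{\mathcal{G}}$, each compared against the same true problem $\mathcal{B}=(\mathcal{M},\bm{\pi}_E)$. Since $r\in\mathcal{R}_{\mathcal{B}}$ is fixed, Lemma~\ref{lemma1} supplies a single pair $(\zeta,V)$ that appears in both bounds; consequently the two upper bounds differ only through the empirical quantities $(\hat{\bm{\pi}}_E,\hat{P})$ versus $(\hat{\bm{\pi}}_E^{\mathcal{G}},\hat{P}^{\mathcal{G}})$. I would therefore reduce the claim to a termwise comparison of the right-hand side of Proposition~\ref{proposition1}: the support-mismatch term $\zeta\,\mathbb{I}_{\{\bm{\pi}_E=0\}}\mathbb{I}_{\{\hat{\bm{\pi}}_E>0\}}$ and the transition-estimation term $\gamma\sum_{s'}|V(s')(P-\hat{P})|$.

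First I would dispatch the support term. Because every observed transition is drawn from $\bm{\pi}_E$, any action with $\hat{\bm{\pi}}_E(\bm{a}\mid s)>0$ must satisfy $\bm{\pi}_E(\bm{a}\mid s)>0$, so the product of indicators vanishes and the term is zero for $\hat{\mathcal{B}}$. For $\hat{\mathcal{B}}^{\mathcal{G}}$ I would invoke the defining invariance $\pi_i(a_i\mid s)=\pi_i(K_g[a_i]\mid L_g[s])$: a transformed sample $(L_g[s_j],K_g[\bm{a}_j],L_g[s'_j])$ can only populate state-action pairs already in the support of $\bm{\pi}_E$, so augmentation creates no spurious support and the term is zero for the augmented problem as well. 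Hence this term contributes equally (nothing) to both bounds, and the whole argument concentrates on the transition term.

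For the transition term I would show that augmentation enlarges the effective sample count at every state-action pair. Writing $N(s,\bm{a})$ for the number of occurrences of $(s,\bm{a})$ in $\tau_E$, the augmented set $\tau_E^{\mathcal{G}}$ contains the identity copy ($g=e$) together with all transformed copies, so the count used to estimate $\hat{P}^{\mathcal{G}}(\cdot\mid s,\bm{a})$ is $N^{\mathcal{G}}(s,\bm{a})=\sum_{g\in\mathcal{G}}N(L_{g}^{-1}[s],K_{g}^{-1}[\bm{a}])\geq N(s,\bm{a})$. The pooled samples are legitimate draws from the same conditional: the transition invariance $P(s'\mid s,\bm{a})=P(L_g[s']\mid L_g[s],K_g[\bm{a}])$ guarantees that a transformed successor $L_g[s'_j]$ is distributed according to $P(\cdot\mid s,\bm{a})$ whenever $(L_g[s_j],K_g[\bm{a}_j])=(s,\bm{a})$, so augmentation pools genuine observations of the true conditional rather than corrupting the estimate. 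Interpreting $|\cdot|^{U}$ as the high-probability concentration bound on the empirical transition error (of order $1/\sqrt{N(s,\bm{a})}$, as in the single-agent analysis we extend), monotonicity in the sample count yields $|P-\hat{P}^{\mathcal{G}}|^{U}\leq|P-\hat{P}|^{U}$ at every $(s,\bm{a})$; multiplying by $\gamma|V(s')|$ and summing over $s'$ gives the stated inequality.

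The main obstacle I anticipate is the rigorous treatment of this effective-sample-size step. The transformed copies of a single original transition are perfectly correlated with it, so the increase from $N$ to $N^{\mathcal{G}}$ only produces genuinely independent observations when distinct original transitions land on the same orbit point. I would therefore argue that, assuming $\mathcal{G}$ acts with trivial stabilizers on the visited pairs (equivalently, at most one $g$ carries $(s_j,\bm{a}_j)$ to a given $(s,\bm{a})$), each original transition contributes at most one pooled sample, so the pooled observations remain independent across the original index $j$ and the concentration bound applies with the enlarged count. This independence bookkeeping, rather than the symmetry algebra, is the place where the proof must be handled with care.
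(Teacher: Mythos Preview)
Your plan matches the paper's overall structure: apply Proposition~\ref{proposition1} to both empirical problems with the same $(\zeta,V)$, compare the support-mismatch term and the transition-error term separately, and handle the latter via $1/\sqrt{N}$-type concentration. The paper organizes the transition comparison by the case split $(s,\bm a)\in\tau_E$ versus $(s,\bm a)\in\tau_E^{\mathcal G}\setminus\tau_E$, bounding unobserved pairs by $1$ and observed ones by $C/\sqrt{D(s,\bm a)}$, so the gain is located exactly at pairs newly covered by augmentation; your global $N^{\mathcal G}\geq N$ monotonicity is a slightly different packaging of the same idea. The substantive divergence is in the support term. You argue both indicator products vanish because $\hat{\bm\pi}_E(\bm a\mid s)>0\Rightarrow\bm\pi_E(\bm a\mid s)>0$, but this requires the convention that $\hat{\bm\pi}_E$ assigns zero mass at unvisited states. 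The paper adopts the opposite convention---$\hat{\bm\pi}_E(\bm a\mid s)>0$ at states not visited in $\tau_E$---so that on $\tau_E^{\mathcal G}\setminus\tau_E$ the un-augmented indicator product equals $1$ while the augmented one is shown to vanish via the $\mathcal G$-invariance of $\bm\pi_E$; under that convention the support term contributes strictly to the inequality rather than trivially. Your treatment is valid under your convention, but you should state it explicitly, since it changes where the improvement is coming from. Your independence caveat about pooled transformed samples is a real concern that the paper's proof simply does not address.
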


From Proposition \ref{proposition2}, we can conclude that the error propagation with symmetry guidance in the worst case is always no greater than the error propagation in the worst case without symmetry guidance. In other words, Symmetry-Guided MIRL corresponds to a more accurate feasible reward set. All proofs are provided in the Appendix.

\begin{figure}[!t]
\centering
\includegraphics[width=\columnwidth]{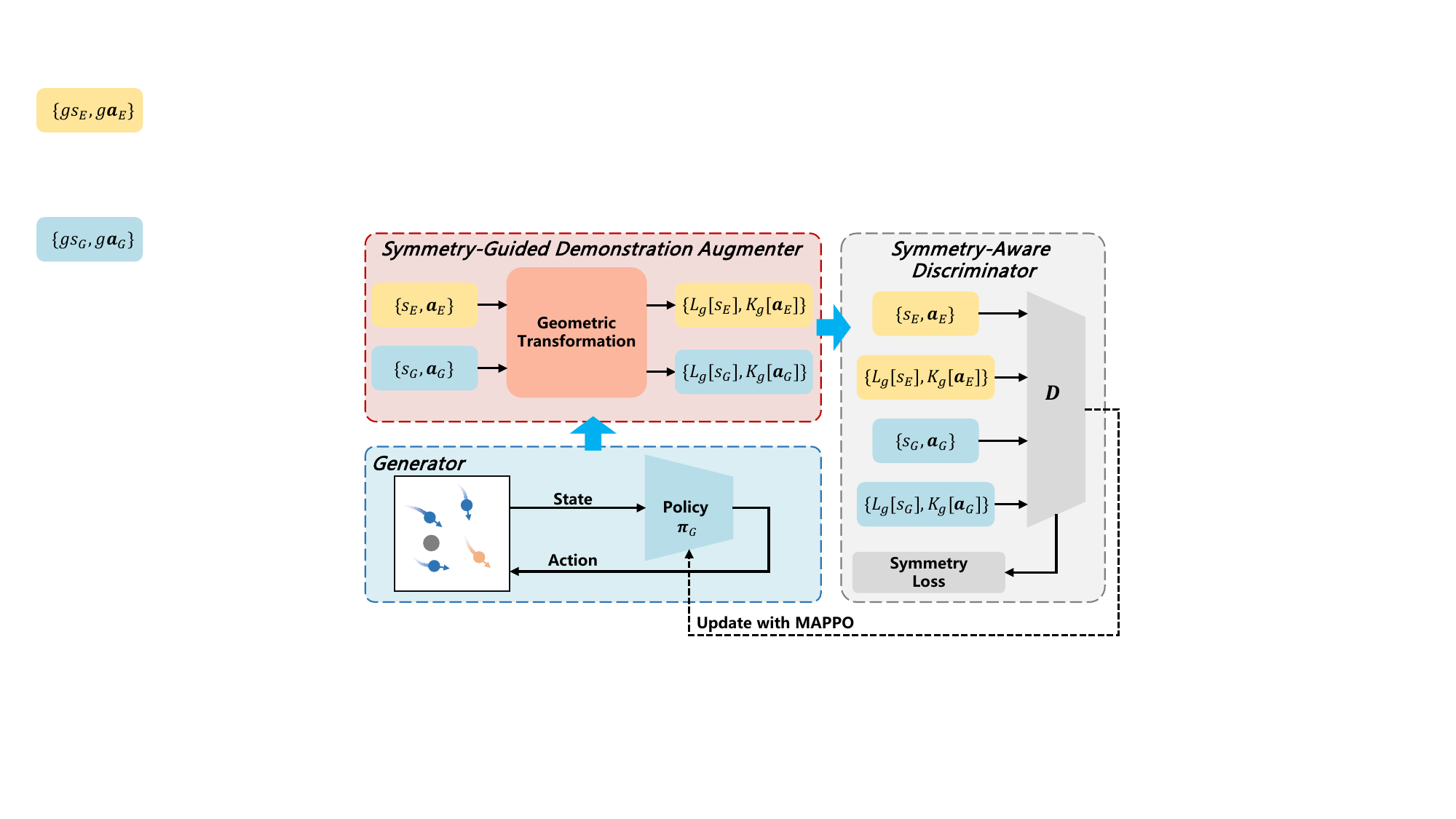} 
\caption{Overview of the proposed Symmetry-Guided Framework (SGF). The framework consists of two main modules. The Symmetry-Guided Demonstration Augmenter performs geometric transformations to augment the given data. The Symmetry-Aware Discriminator is used to enhance the ability to identify symmetric data. The subscript $G$ denotes entities related to the generator, while the subscript $E$ denotes entities related to the expert demonstrations.
}
\label{framework}
\end{figure}

\section{Symmetry-Guided Framework}
Based on Section \ref{sec4}, we propose a Symmetry-Guided Framework (SGF) for multi-agent adversarial IRL to improve the sample efficiency. It is worth noting that the theory presented in this paper is not limited to these algorithms and can also be applied to other MIRL algorithms. The SGF consists of two primary components: Symmetry-Guided Demonstration Augmenter (SGDA) and Symmetry-Aware Discriminator (SAD). The overall architecture is depicted in Fig. \ref{framework} and the training process is shown in Algorithm \ref{alg1}.

\subsection{Symmetry-Guided Demonstration Augmenter}

Based on Definition \ref{definition2}, the most direct way to leverage symmetry in MIRL is through data augmentation. To overcome the limitation of insufficient expert demonstrations in multi-agent adversarial IRL, this paper proposes a Symmetry-Guided Demonstration Augmenter that utilizes the symmetry in multi-robot systems to augment the dataset.

In the context of multi-robot systems, symmetry is primarily manifested through equivariance and invariance with respect to rotations and reflections. These geometric transformations are represented by the orthogonal group $\mathcal{O}(n)$, defined as follows:
\begin{equation*}
    \mathcal{O}(n) = \{Z \in \mathbb{R}^{n \times n} : Z^T Z = I\},
\end{equation*}
where $Z^T$ denotes the transpose of matrix $Z$, and $I$ represents the identity matrix. In this paper, we focus on a discrete subgroup of the $\mathcal{O}(n)$ group, specifically  the dihedral group $\mathcal{D}_n$, which is defined as follows:
\begin{equation*}
    \mathcal{D}_n = \{R, S \mid R^n = I, S^2 = I, SRS = R^{-1}\},
\end{equation*}
where $R$ and $S$ signify rotation and reflection operations, respectively. The $\mathcal{D}_n$ group includes $2n$ elements, consisting of $n$ rotational and $n$ reflective transformations. In the subsequent section, we explore the application of data augmentation based on the $\mathcal{D}_n$ group within the $\mathcal{G}$-invariant MG\textbackslash R framework.

\subsubsection{State Transformation}

In multi-robot systems, the environmental features can be of various types. Some features exhibit equivariance to geometric transformations, denoted as $s_{equ}$, while others show invariance, remaining unaffected by these transformations, denoted as $s_{inv}$. We represent the environmental state as the concatenation of these two types of features, $s = [s_{equ}, s_{inv}] \in \mathcal{S}$. For $s_{equ}$, our module applies predefined geometric transformations $g \in \mathcal{D}_n$ to simulate potential real-world changes in orientation and position. For example, features representing robot coordinates are rotated across various angles to generate new feasible states. On the other hand, since $s_{inv}$ is unaffected by geometric transformations, it is preserved in its original form during data augmentation. Invariance typically applies to features such as the internal states of robots and distances. In general, the impact of the geometric transformations $g \in \mathcal{D}_n$ on $s \in \mathcal{S}$ can be expressed as $L_g[s] = [L_g[s_{equ}], s_{inv}]$.

\subsubsection{Action Transformation}

According to the $\mathcal{G}$-invariant MG\textbackslash R, when geometric transformations act on the environmental state, the agents' actions are also subjected to corresponding transformations. Similar to the representation of environmental states, the agents' actions are categorized into equivariant actions $\bm{a}_{equ}$ and invariant actions $\bm{a}_{inv}$, denoted as $\bm{a} = [\bm{a}_{equ}, \bm{a}_{inv}] \in \mathcal{A}$. For $\bm{a}_{equ}$, typically directional actions, the same geometric transformations $g \in \mathcal{D}_n$ as the state are applied to ensure consistency between the state and the action. During augmentation, invariant actions $\bm{a}_{inv}$ are treated as constants since their execution is independent of the geometric properties of the environment. Therefore, the geometric transformation $g \in \mathcal{D}_n$ acts on the actions $\bm{a} \in \mathcal{A}$ as $K_g[\bm{a}] = [K_g[\bm{a}_{equ}], \bm{a}_{inv}]$.

The SGDA leverages the inherent symmetry of multi-robot systems to expand the scope of given demonstrations. This module enables the demonstrations to cover a broader range of environmental states, effectively increasing the number of available demonstrations without the need for additional collection of costly demonstrations. Based on Proposition \ref{proposition2}, augmented demonstrations through SGDA can recover a more accurate reward function, thus improving the sample efficiency of MIRL algorithms.

\begin{algorithm}[!t]
\caption{Symmetry-Guided Framework.}
\label{alg1}
\begin{algorithmic}[1] 
\STATE \textbf{Input}: Number of agents $N$, expert demonstrations $\tau_E$, geometric transformations $g \in \mathcal{D}_n$.
\STATE \textbf{Initialize}: Generator network $\bm{\pi}_{\bm{\theta}}$, discriminators $D_{\bm{\omega}}$.
\STATE Perform data augmentation on $\tau_E$: $\tau_E \leftarrow \tau_E \cup g\tau_E$.
\REPEAT
    \STATE Sample generator trajectories $\tau_{G}$ from $\bm{\pi}_{\theta}$.
    \STATE Sample state-action pairs $\mathcal{X}_E$, $\mathcal{X}_G$ from $\tau_E$, $\tau_G$.
    \FOR{$i=1,\dots,N$}
        \STATE Perform data augmentation on $\mathcal{X}_E$, $\mathcal{X}_G$ through $g$.
        \STATE Update $\omega_i$ to minimize the objective in Eq. \eqref{loss3}.
    \ENDFOR
    \FOR{$i=1,\dots,N$}
        \STATE Update $\theta_i$ with the feedback of $D_{\omega_i}$ by MAPPO.
    \ENDFOR
\UNTIL{Convergence}
\STATE \textbf{Output}: Learned policy $\bm{\pi}_{\bm{\theta}}$.
\end{algorithmic}
\end{algorithm}

\begin{figure*}[!t]
     \centering
     \begin{subfigure}[b]{0.32\textwidth}
         \centering
         \includegraphics[width=\textwidth]{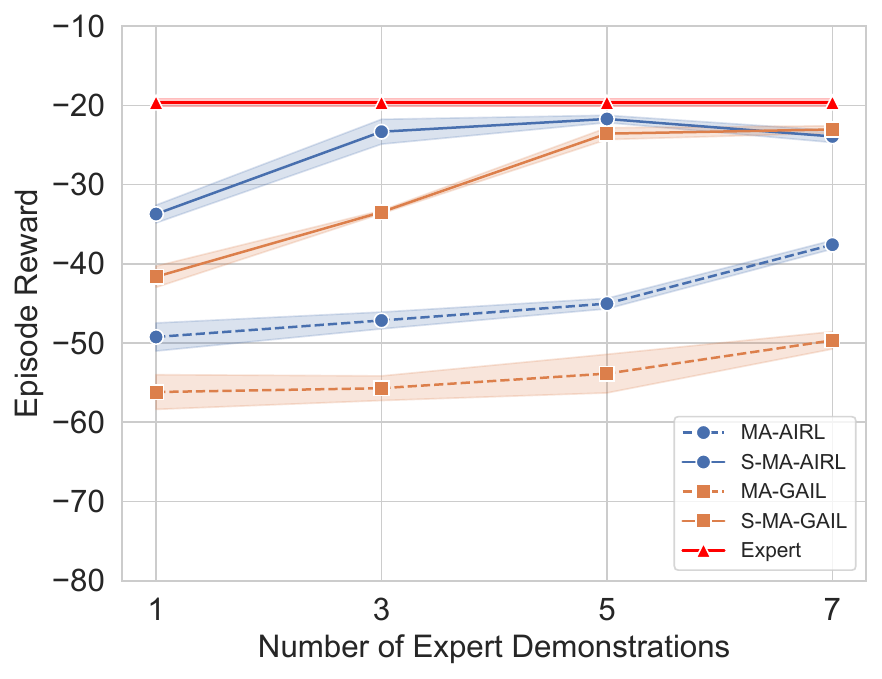}
         \caption{Rendezvous.}\label{expa}
     \end{subfigure}
     \hfill
     \begin{subfigure}[b]{0.32\textwidth}
         \centering
         \includegraphics[width=\textwidth]{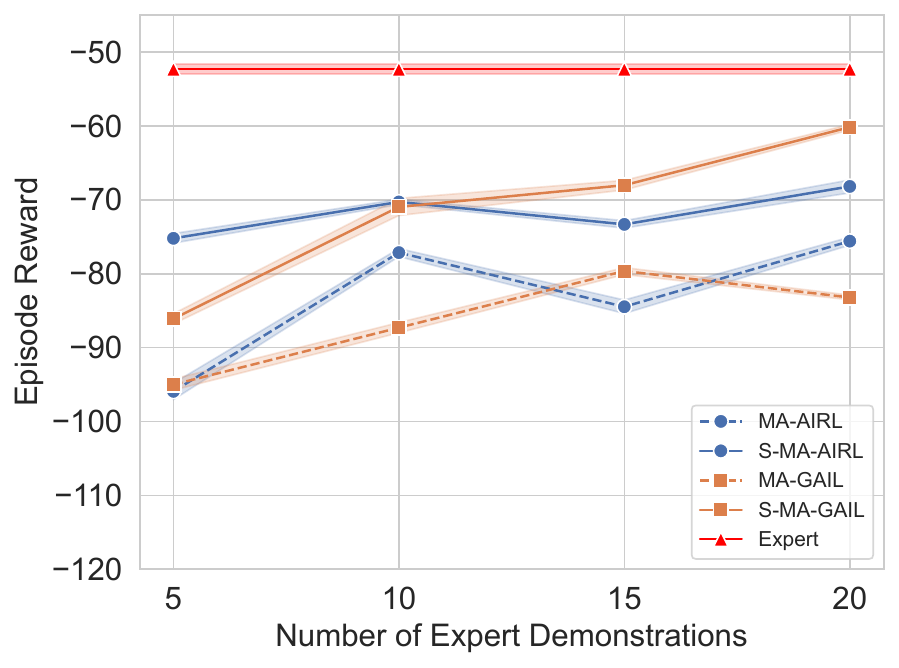}
         \caption{Pursuit.}\label{expb}
     \end{subfigure}
     \hfill
     \begin{subfigure}[b]{0.32\textwidth}
         \centering
         \includegraphics[width=\textwidth]{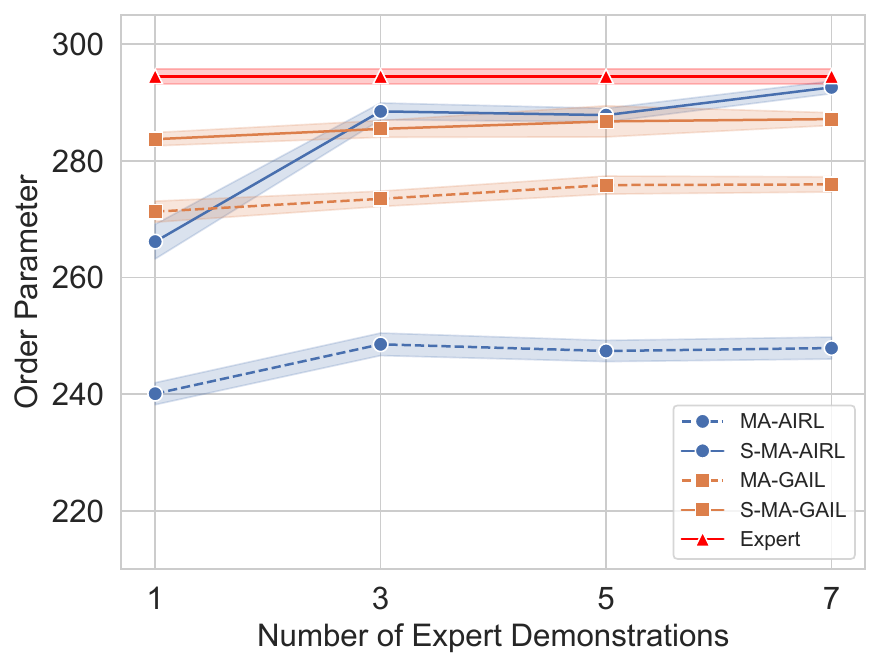}
         \caption{Vicsek.}\label{expc}
     \end{subfigure}
     \caption{Average true rewards across three tasks. Experiments are conducted with four different numbers of expert demonstrations. Each experiment is repeated under five different random seeds to ensure the reliability of the results.}\label{fig4}
\end{figure*}

\subsection{Symmetry-Aware Discriminator}
In multi-agent adversarial IRL, the policy is trained using a generative adversarial method through expert demonstrations $\tau_E=\{(s_j,\bm{a}_j,s^\prime_j)\}_{j=1}^M$. The generator is a specific MARL algorithm, with its policy denoted as $\bm{\pi}_G$. The role of discriminators is to distinguish the behaviors produced by $\bm{\pi}_G$ and those of the expert as accurately as possible. The objective of discriminators is to minimize the cross-entropy between expert demonstrations and generator samples:
\begin{equation}
\begin{aligned}
    \mathcal{L}({\bm{\omega}})=&-\mathbb{E}_{\tau_E}\left[\sum_i^N \log D_{\omega_i}\left(s,a_i,s^{\prime}\right)\right]- \\
    &\mathbb{E}_{\bm{\pi}_G}\left[\sum_i^N \log \left(1-D_{\omega_i}(s,a_i,s^{\prime})\right)\right], \label{loss1}
\end{aligned}
\end{equation}
where $\bm{\omega}=\{\omega_1, \dots, \omega_N\}$ are parameters of discriminators. Due to the inherent symmetry of multi-robot systems, the discriminator must be able to effectively distinguish between expert demonstrations and generated samples that have undergone geometric transformations. This capability is crucial for providing accurate feedback to the generative model. However, when geometric transformations are applied to expert demonstrations and generated samples, the optimization direction induced by the loss function \eqref{loss1} fails to guide the discriminator to distinguish whether transformed samples originate from expert demonstrations or are generated. To appropriately utilize symmetry, we propose a symmetry-aware discriminator, with the new loss defined as follows:
\begin{equation}
    \begin{aligned}
        \mathcal{L}_S(\bm{\omega})=&-\mathbb{E}_{\tau_E}\left[\sum_i^N \log D_{\omega_i}\left(L_g[s], K_g[a_i], L_g[s^{\prime}]\right)\right] -\\
    &\mathbb{E}_{\bm{\pi}_G}\left[\sum_i^N \log \left(1-D_{\omega_i}(L_g[s], K_g[a_i], L_g[s^{\prime}])\right)\right], \label{loss2}
    \end{aligned}
\end{equation}
This function aims to minimize the cross-entropy between the transformed expert demonstrations and generated samples. This loss helps the discriminator better identify data that has been symmetrically augmented, providing more precise feedback to the generator. Discriminators of our SGF framework minimize the following objective:
\begin{equation}
    \mathcal{L}_{SGF} = \mathcal{L}(\bm{\omega})+\mathcal{L}_S(\bm{\omega}). \label{loss3}
\end{equation}


\section{Experiments}

\subsection{Experimental Settings}
\label{sec_exp_setting}
The proposed framework is applied to the mainstream multi-agent adversarial IRL algorithms, including MA-GAIL \cite{song2018multi} and MA-AIRL \cite{yu2019multi}. We use the transformations contained in the $\mathcal{D}_4$ group as the augmentation methods in SGDA. We denote algorithms that use SGF by the prefix ``S-''. Experiments are conducted across three multi-agent continuous cooperative tasks including Rendezvous, Pursuit, and Vicsek. Each environment incorporates an underlying reward function or evaluation metric that facilitates the evaluation of the generated policies. We set the number of expert demonstrations based on the size of the state space. Tasks with larger state spaces require more expert demonstrations. The bold numbers in all tables indicate the experimental results using SGF.

\subsubsection{Rendezvous}
As shown in Fig. \ref{env1}, this task requires agents to gather autonomously without a predefined target point to minimize the distances between each other. We train the expert agents based on SOTA MARL algorithms (like MAPPO \cite{yu2022surprising}) with the underlying reward. The trained policies are then used to generate expert demonstrations.
\subsubsection{Pursuit}
As shown in Fig. \ref{env2}, multiple predators attempt to chase the prey that adopts a Voronoi strategy \cite{zhou2016cooperative} and has a speed 1.5 times faster than that of the predators. Expert policies continue to be trained using the underlying reward with SOTA MARL algorithms.

\subsubsection{Vicsek}
The Vicsek model \cite{vicsek1995novel} is a classic collective physics model where agents attempt to align their directions of movement as shown in Fig. \ref{env3}. In this task, we utilize the Vicsek model to generate expert demonstrations in randomly initialized scenarios. The effectiveness of the generated policies is evaluated using order parameters.

\begin{figure}[!t]
  \centering
  \begin{subfigure}[b]{0.32\columnwidth}
    \includegraphics[width=\linewidth]{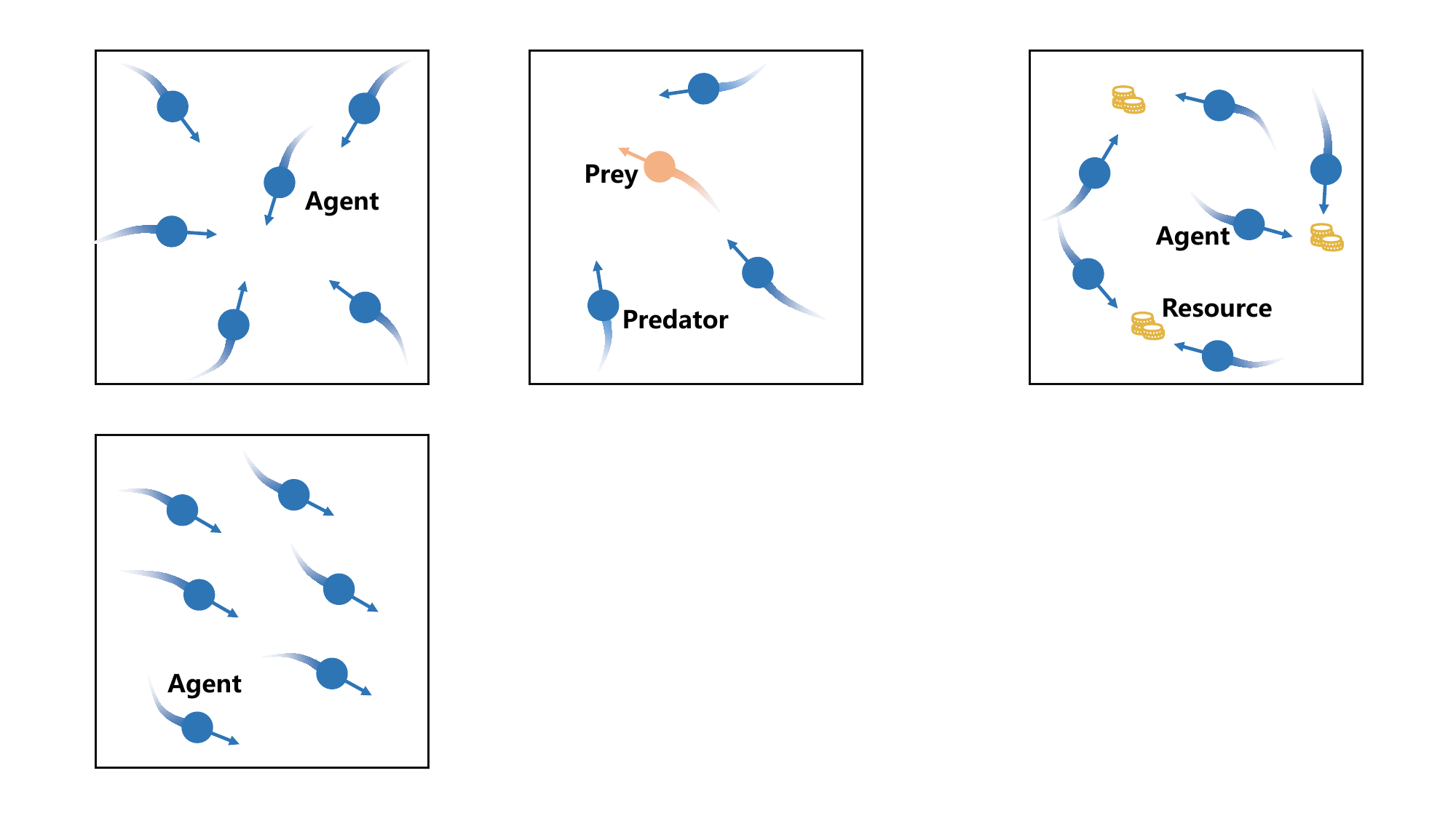}
    \caption{Rendezvous.}
    \label{env1}
  \end{subfigure}
  \begin{subfigure}[b]{0.32\columnwidth}
    \includegraphics[width=\linewidth]{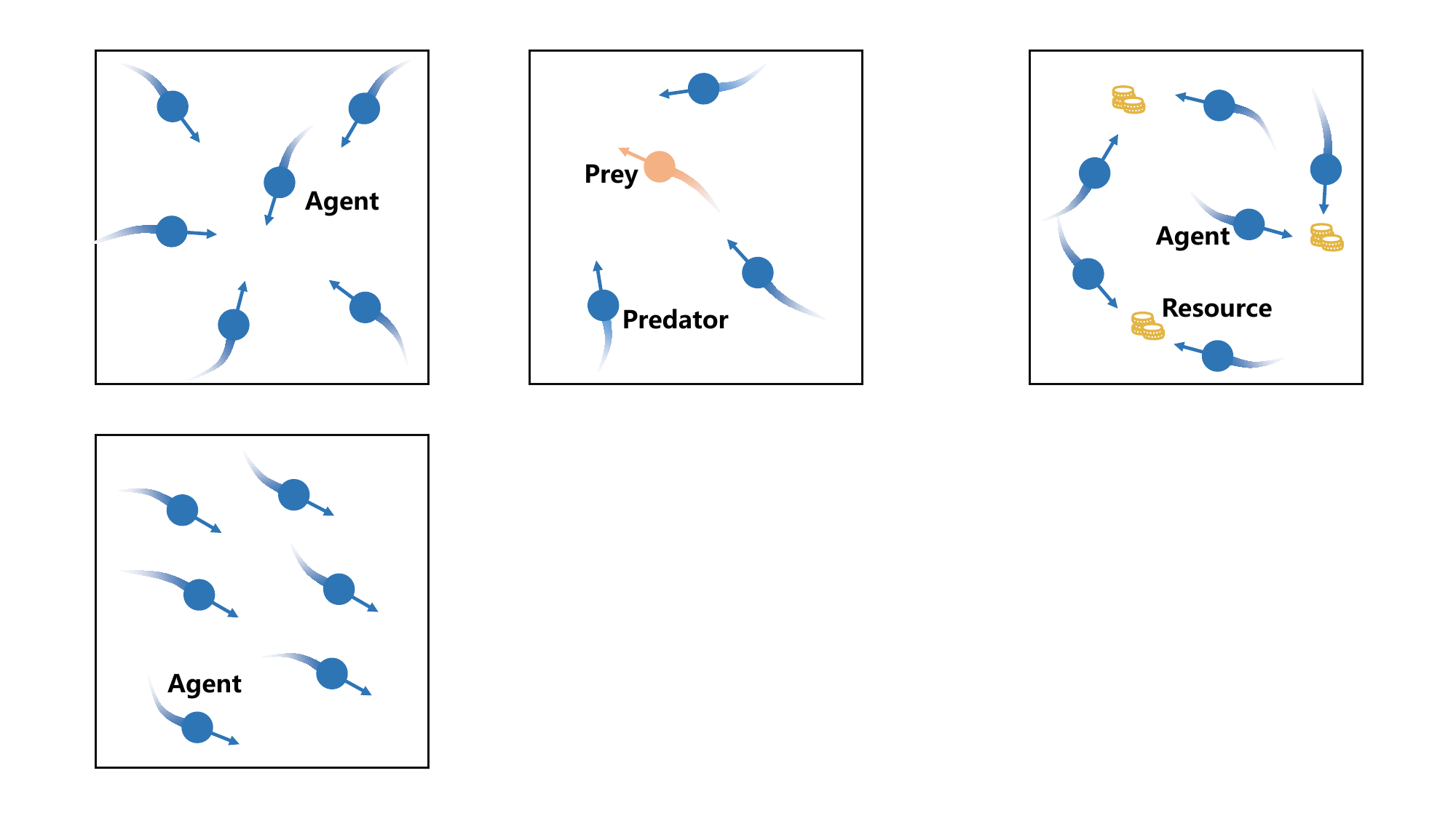}
    \caption{Pursuit.}
    \label{env2}
  \end{subfigure}
  \begin{subfigure}[b]{0.32\columnwidth}
    \includegraphics[width=\linewidth]{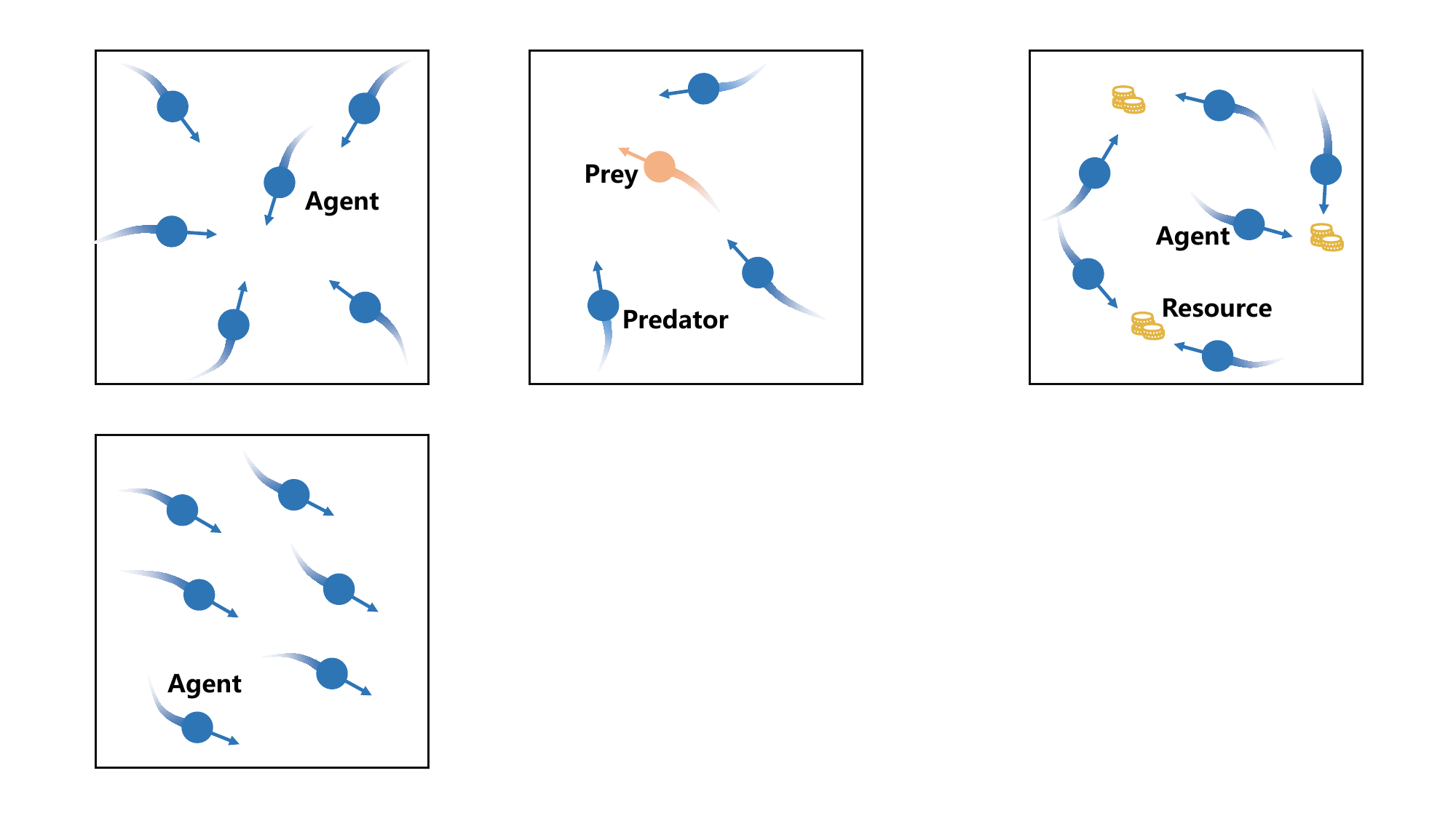}
    \caption{Vicsek.}
    \label{env3}
  \end{subfigure}
  \caption{The simulated tasks considered in this paper.}
  \label{env}
\end{figure}

\subsection{Main Results}
In this section, we present and analyze the main experimental results under the settings described in Section \ref{sec_exp_setting}. Experiments are conducted on all tasks with varying numbers of expert demonstrations, with the number of agents fixed at 10. All results are the average of five different random seeds, as shown in Fig. \ref{fig4}.

\subsubsection{Rendezvous}
Experiments are conducted with $M\in\{100,300,500,700\}$. We evaluate the performance of baselines (MA-AIRL and MA-GAIL) compared to those enhanced by SGF (S-MA-AIRL and S-MA-GAIL). As shown in Fig. \ref{expa}, the symmetry-guided algorithms consistently outperform the baselines in terms of convergence rewards across all demonstration sizes tested. SGF enables algorithms to achieve performance comparable to algorithms that use a larger number of demonstrations, even when utilizing fewer demonstrations. These results highlight the significant impact of SGF in improving sample efficiency and policy learning.

\subsubsection{Pursuit}
Since the presence of prey increases the state space of the environment, we use more expert demonstrations, specifically $M \in \{500,1000,1500,2000\}$. As depicted in Fig. \ref{expb}, as the number of expert demonstrations increased, the baselines exhibited instability. However, the algorithms enhanced with the SGF mitigate this instability and effectively improve the underlying rewards in all demonstration numbers. Moreover, SGF allows algorithms to learn more effective policies with fewer samples. These results demonstrate the effectiveness of SGF in improving algorithm performance in complex tasks.

\subsubsection{Vicsek}
We conduct experiments with $M \in \{100, 300, 500, 700\}$. As shown in Fig. \ref{expc}, the order parameters of the baseline models gradually increase as the number of expert demonstrations increases, with the order parameter remaining at a relatively low level. In contrast, algorithms that use SGF exhibit a significant increase in the order parameter, reaching levels close to those of the expert. We attribute this improvement to the use of symmetry, which expands the range of expert demonstrations covering environmental states. This enables the learned policy to handle a wider variety of situations more effectively. This result further validates that our framework can effectively improve both learning efficiency and policy performance.

\begin{table}[!t]
    \caption{Ablation study on Symmetry-Aware Discriminator (SAD), as expressed by the mean convergence rewards. ``Normal'' refers to discriminators that do not use symmetry. }
    \centering
    \setlength{\tabcolsep}{2pt}
    \resizebox{0.88\columnwidth}{!}{
    \begin{tabular}{cc|c|c|c|c}
    \toprule
    \multirow{2}{*}{Task} & Expert & \multicolumn{2}{c|}{S-MA-AIRL} & \multicolumn{2}{c}{S-MA-GAIL} \\
    & Num & SAD & Normal & SAD & Normal \\
    \midrule
    \multirow{4}{*}{Rendezvous} 
    & 100 & $\textbf{-33.7} \pm 1.1$ & $-43.4 \pm 1.0$ & $\textbf{-41.6} \pm 1.3$ & $-42.2 \pm 0.8$  \\

    & 300 & $\textbf{-23.1} \pm 0.7$ & $-29.4 \pm 0.7$ & $\textbf{-33.8} \pm 0.3$ & $-40.8 \pm 0.8$  \\
    & 500 & $\textbf{-21.7} \pm 0.5$ & $-33.2 \pm 1.2$ & $\textbf{-23.6} \pm 0.7$ & $-29.0 \pm 1.3$ \\
    & 700 & $\textbf{-23.9} \pm 0.8$ & $-35.9 \pm 0.9$ & $\textbf{-23.5} \pm 0.6$ & $-26.9 \pm 0.8$ \\
    \midrule
    \multirow{4}{*}{Pursuit} 
    & 500 & $\textbf{-75.2} \pm 0.7$ & $-80.1 \pm 1.0$ & $\textbf{-86.0} \pm 0.7$ & $-89.3 \pm 0.8$  \\
    & 1000 & $\textbf{-70.3} \pm 0.4$ & $-77.8 \pm 1.3$ & $\textbf{-70.9} \pm 1.2$ & $-78.7 \pm 0.9$  \\
    & 1500 & $\textbf{-73.3} \pm 0.5$ & $-76.7 \pm 0.8$ & $\textbf{-68.0} \pm 0.7$ & $-76.9 \pm 0.6$  \\
    & 2000 & $\textbf{-68.1} \pm 0.9$ & $-72.9 \pm 0.6$ & $\textbf{-60.1} \pm 0.4$ & $-68.1 \pm 1.1$  \\
    \midrule
    \multirow{4}{*}{Vicsek} 
    & 100 & $\textbf{266.1} \pm 2.9$ & $235.3 \pm 2.3$ & $\textbf{283.7} \pm 1.1$ & $280.1 \pm 1.2$ \\
    & 300 & $\textbf{286.2} \pm 2.0$ & $276.5 \pm 2.5$ & $\textbf{285.5} \pm 1.4$ & $281.9 \pm 1.4$  \\
    & 500 & $\textbf{287.8} \pm 1.2$ & $267.5 \pm 2.8$ & $\textbf{286.7} \pm 2.6$ & $272.8 \pm 1.8$  \\
    & 700 & $\textbf{292.5} \pm 1.1$ & $284.8 \pm 2.4$ & $\textbf{287.1} \pm 1.1$ & $274.6 \pm 2.0$  \\
    \bottomrule
    \end{tabular}}
    \label{tab:abl}
\end{table}

\begin{figure}[!t]
  \centering
  \begin{subfigure}[t]{0.3343\columnwidth}
    \includegraphics[width=\linewidth]{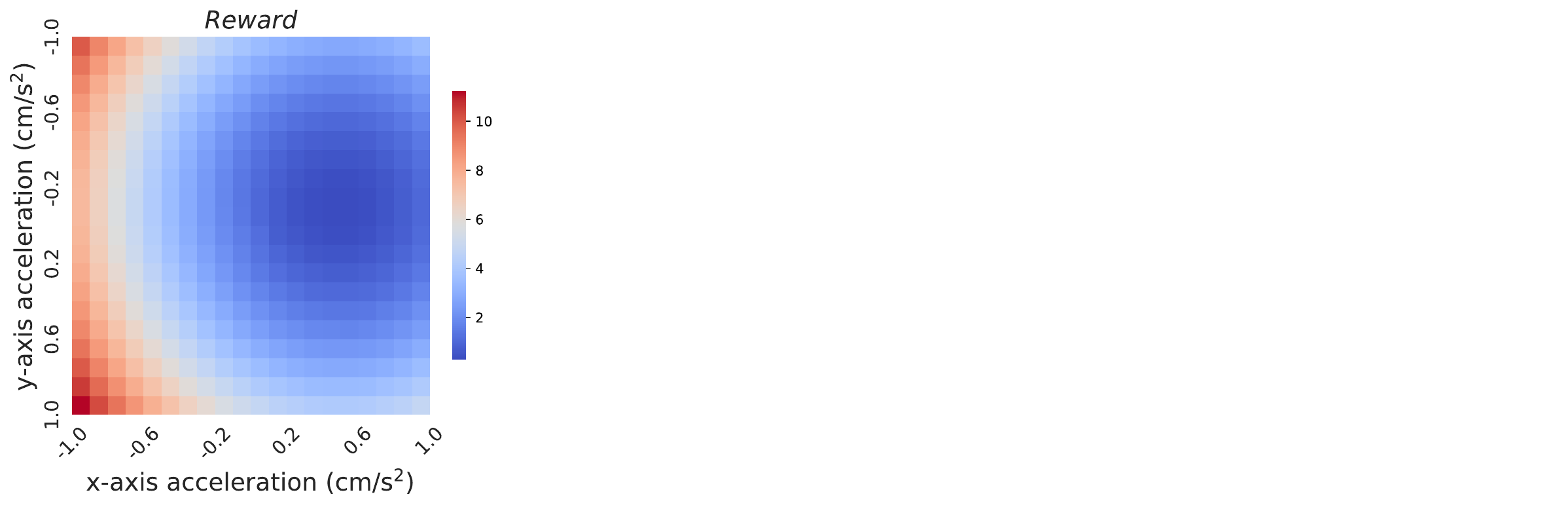}
    \caption{S-MA-AIRL}
    \label{reward1}
  \end{subfigure}
  \begin{subfigure}[t]{0.2913\columnwidth}
    \includegraphics[width=\linewidth]{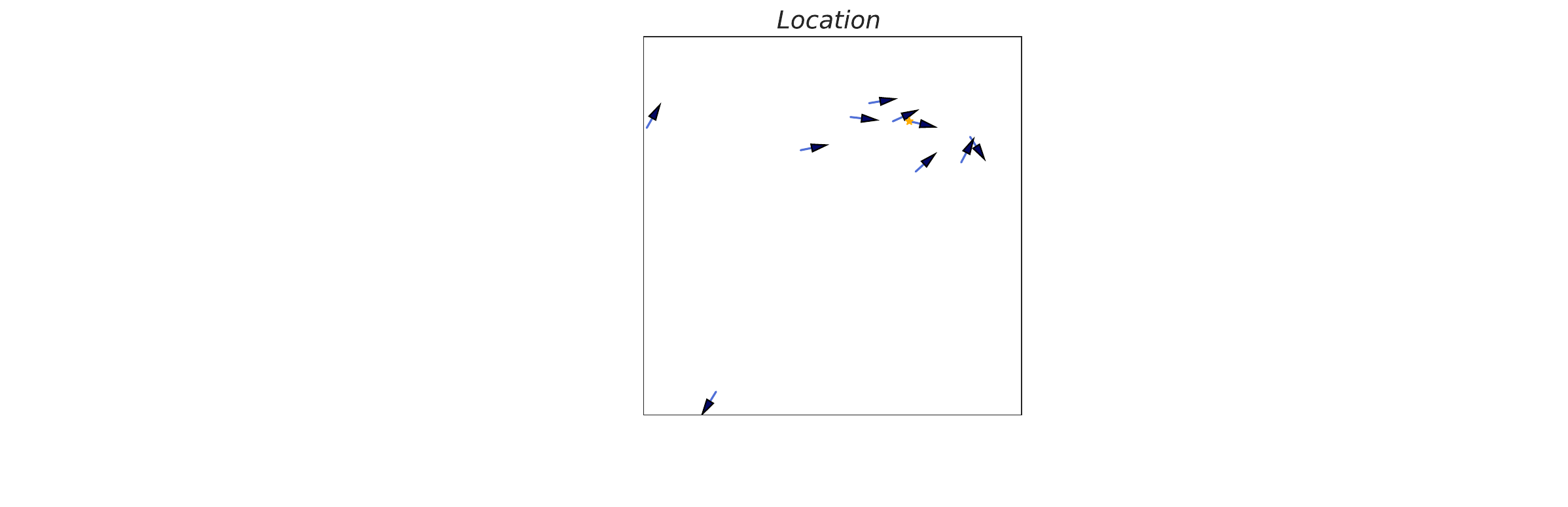}
    \caption{Agent Location}
    \label{location}
  \end{subfigure}
  \begin{subfigure}[t]{0.3343\columnwidth}
    \includegraphics[width=\linewidth]{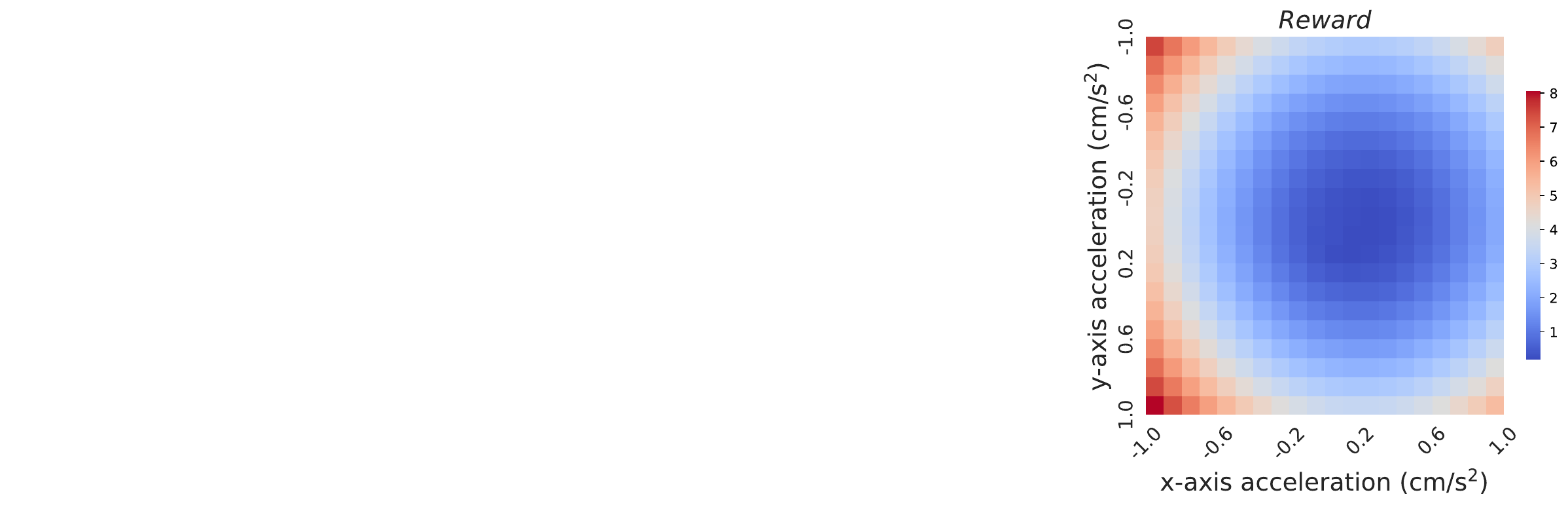}
    \caption{MA-AIRL}
    \label{reward2}
  \end{subfigure}
  \caption{Visualization of rewards for the yellow-starred agent under different actions in the Rendezvous task. The triangle indicates the agent's direction.}
  \label{reward}
\end{figure}

\subsection{Analysis of the Recovered Reward Function}

We conduct a visual analysis of the reward functions recovered by the algorithms, comparing S-MA-AIRL and MA-AIRL. Taking the Rendezvous task as an example, we randomly select a state at a specific time and evaluate the rewards for a random agent under different actions (Fig. \ref{location}). The actions are the accelerations on the $x$ and $y$ axes. The rewards recovered by S-MA-AIRL (Fig. \ref{reward1}) indicate that the agent should accelerate towards the upper-left or lower-left directions, i.e., decelerating in the current direction. Given its relatively rightward position in the swarm, this reward is reasonable. In contrast, the rewards recovered by MA-AIRL (Fig. \ref{reward2}) show higher rewards for accelerating towards the lower-right or upper-right, which is detrimental to completing the Rendezvous task, making the recovered rewards inaccurate. This result validates our theory that incorporating symmetry leads to a more accurate reward recovery.

\subsection{Ablation Study}
This section presents the experimental results testing the impact of SAD across three tasks, with detailed outcomes shown in Table \ref{tab:abl}. The number of agents used in the experiments is 10. Across all tasks, with different numbers of expert demonstrations, the use of SAD consistently improves the performance of the generated policies. This validates that using SAD helps better identify symmetric data, thereby providing more informative feedback to the generator. These results highlight the importance of incorporating symmetry into the discriminator.

\begin{table}[!t]
    \caption{This table displays the impact of varying numbers of agents on different tasks, as expressed by the mean convergence rewards of the models.}
    \centering
    \setlength{\tabcolsep}{1pt}
    \resizebox{1.0\columnwidth}{!}{
    \begin{tabular}{cc|c|cc|cc}
    \toprule
    \multirow{2}{*}{Task} & Agent & \multicolumn{5}{c}{Model} \\
    & Num & Expert & MA-AIRL & S-MA-AIRL & MA-GAIL & S-MA-GAIL \\
    \midrule
    \multirow{4}{*}{Rendezvous} 
    & 5 & $-22.3 \pm 2.2$ & $-35.8 \pm 1.6$ & $\textbf{-28.3} \pm 1.2$ & $-42.8 \pm 1.4$ & $\textbf{-29.8} \pm 1.3$ \\
    & 10 & $-19.9 \pm 0.3$ & $-47.4 \pm 1.3$ & $\textbf{-23.1} \pm 0.7$ & $-57.2 \pm 2.0$ & $\textbf{-33.8} \pm 0.4$ \\
    & 15 & $-23.9 \pm 0.3$ & $-44.9 \pm 0.9$ & $\textbf{-29.5} \pm 0.9$ & $-59.7 \pm 1.1$ & $\textbf{-46.1} \pm 1.1$ \\
    & 20 & $-24.2 \pm 0.4$ & $-45.7 \pm 0.9$ & $\textbf{-26.0} \pm 0.4$ & $-65.1 \pm 1.2$ & $\textbf{-29.0} \pm 0.7$ \\
    \midrule
    \multirow{4}{*}{Pursuit} 
    & 5 & $-74.7 \pm 0.9$ & $-81.8 \pm 1.2$ & $\textbf{-79.9} \pm 0.6$ & $-98.6 \pm 1.1$ &  $\textbf{-84.0} \pm 0.7$\\
    & 10 & $-52.3 \pm 0.4$ & $-77.1 \pm 0.5$ &  $\textbf{-70.6} \pm 0.4$ & $-87.3 \pm 0.7$ & $\textbf{-71.0} \pm 1.2$ \\
    & 15 & $-51.6 \pm 0.3$ & $-82.9 \pm 1.0$ & $\textbf{-60.3} \pm 0.3$ & $-84.0 \pm 0.4$ & $\textbf{-67.0} \pm 0.7$ \\
    & 20 & $-51.2 \pm 0.5$ & $-70.5 \pm 1.2$ & $\textbf{-61.5} \pm 1.0$ & $-84.1 \pm 1.5$ & $\textbf{-69.9} \pm 0.5$ \\
    \midrule
    \multirow{4}{*}{Vicsek} 
    & 5 & $295.3 \pm 1.9$ & $269.8 \pm 1.4$ & $\textbf{270.5} \pm 1.2$ & $279.1 \pm 0.8$ & $\textbf{289.5} \pm 0.4$ \\
    & 10 & $293.5 \pm 2.4$ & $247.5 \pm 1.1$ & $ \textbf{286.2} \pm 2.0$ & $271.3 \pm 1.8$ & $\textbf{285.5} \pm 1.4$ \\
    & 15 & $292.4 \pm 2.7$ & $244.2 \pm 1.7$ & $\textbf{268.9} \pm 2.5$ & $268.3 \pm 1.1$ & $\textbf{284.5} \pm 0.8$ \\
    & 20 & $291.1 \pm 1.6$ & $240.5 \pm 5.7$ & $\textbf{255.4} \pm 1.5$ & $262.5 \pm 0.9$ & $\textbf{286.1} \pm 0.7$ \\
    \bottomrule
    \end{tabular}}
    \label{tab:num}
\end{table}

\subsection{The Impact of Different Numbers of Agents}
In this section, we explore the impact of varying the number of agents (5, 10, 15, 20) on the performance of algorithms. Experiments are conducted with 300 expert demonstrations for the Rendezvous and Vicsek and 1000 for the Pursuit. The results are presented in Table \ref{tab:num}. The findings show that algorithms using SGF consistently outperform the baseline algorithms, regardless of the number of agents. As the number of agents increases, the performance of the baseline algorithms deteriorates. However, SGF effectively mitigates this degradation, maintaining strong performance even with a large number of agents.


\begin{figure}[!t]
  \centering
  \begin{subfigure}[b]{0.49\columnwidth}
    \includegraphics[width=\linewidth]{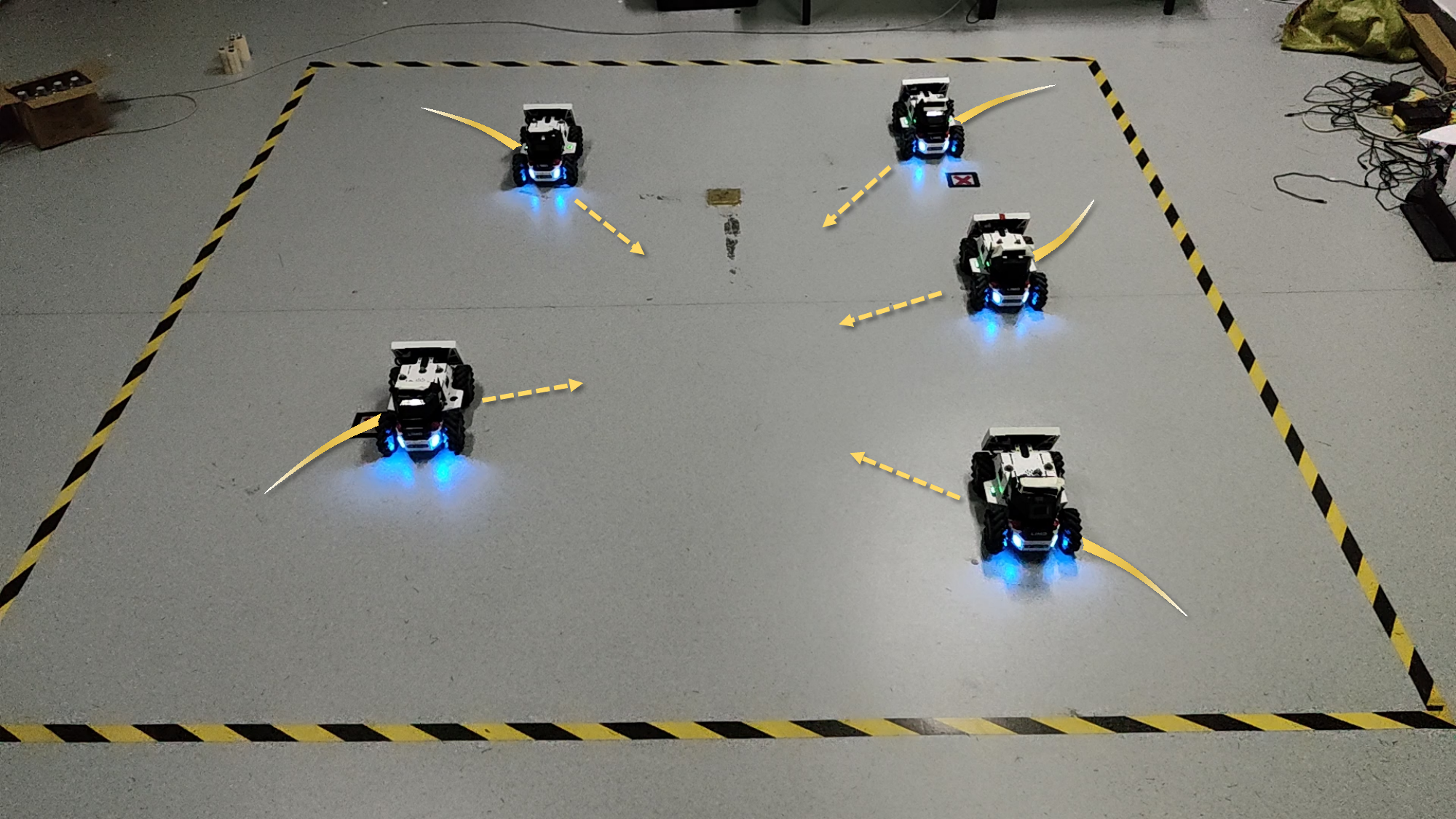}
    \caption{Rendezvous.}
    \label{phy1}
  \end{subfigure}
  \begin{subfigure}[b]{0.49\columnwidth}
    \includegraphics[width=\linewidth]{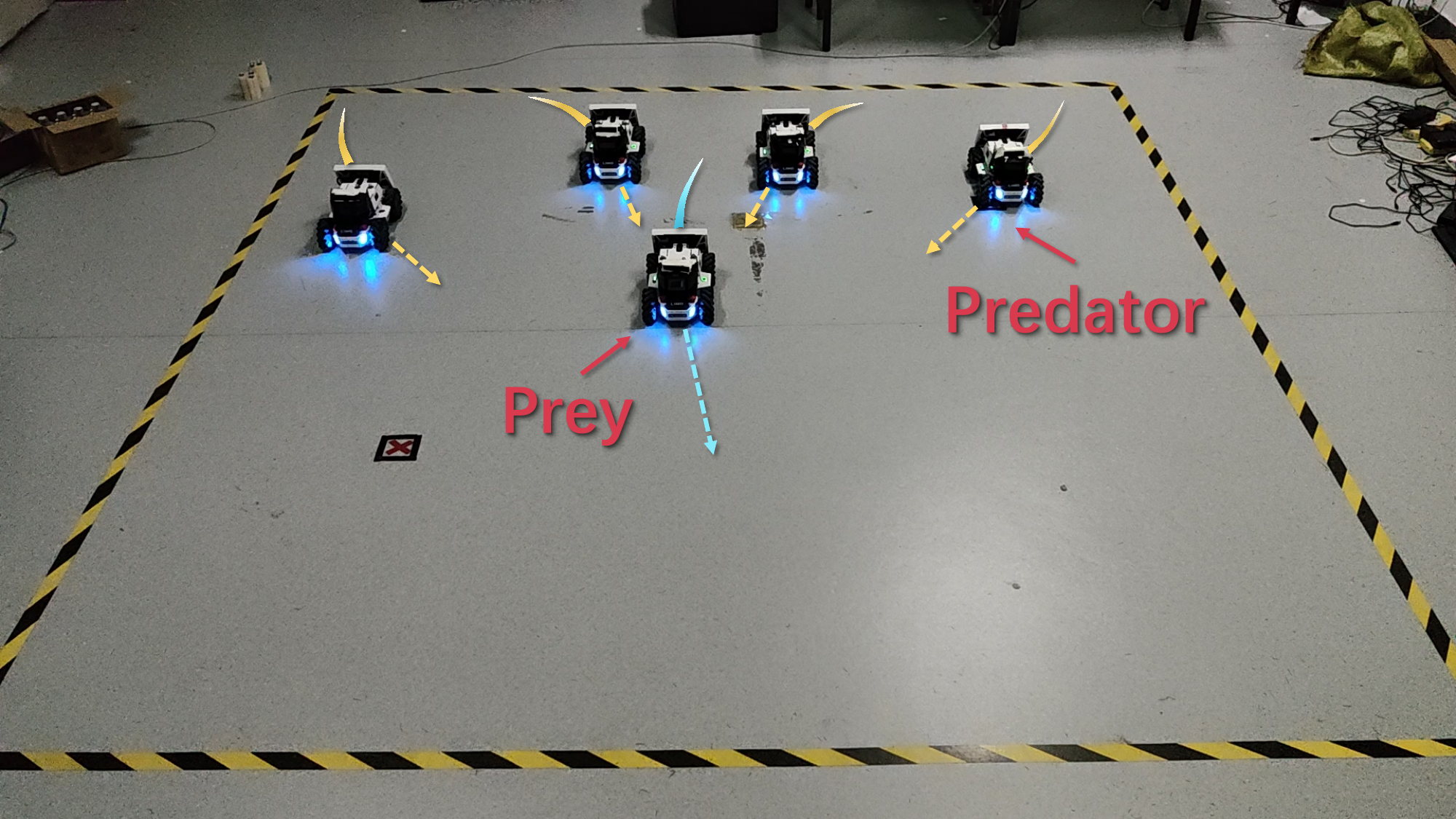}
    \caption{Pursuit.}
    \label{phy2}
  \end{subfigure}
  \caption{Demonstration of the real-world scenarios.}
  \label{phy}
\end{figure}

\begin{table}[!t]
    \centering
    \setlength{\tabcolsep}{10pt}
    \caption{Performance results of the real-world tasks.}
    
    \begin{tabular}{c|cc}
    \toprule
    Task & MA-AIRL & S-MA-AIRL \\
    \midrule
    Rendezvous & 668.6 & \textbf{580.8} \\
    Pursuit & 1237.7 & \textbf{1023.8} \\
    \bottomrule
    \end{tabular}
    \label{tab:phy_res}
\end{table}

\subsection{Demonstration on Robots}
In this section, we deploy our algorithms in real-world environments to validate the effectiveness of SGF. Since Vicsek uses periodic boundary conditions that cannot be implemented in the real world, we focus on the other tasks instead. Specifically, we use Limo robots as agents, controlled via the Robot Operating System (ROS) system, with real-time environmental states captured by Nokov motion capture system. The setup of the task is shown in Fig. \ref{phy}. The Rendezvous involves 5 agents, while the Pursuit consists of 4 predators and 1 prey. Without loss of generality, we choose MA-AIRL and S-MA-AIRL for real-world validation, using cumulative distance as the performance metric. The results, as presented in Table \ref{tab:phy_res}, show that the algorithms using SGF complete tasks faster in both tasks, confirming the effectiveness of our method in real-world applications.

\section{Conclusions}
This paper aims to address the problem of sample inefficiency in MIRL. We provide the first proof that leveraging the inherent symmetry in MAS can recover more accurate reward functions. Based on this theory, we propose a novel framework for multi-agent adversarial IRL algorithms, termed the Symmetry-Guided Framework (SGF). Experimental results show that our framework significantly enhances the performance of the algorithms, achieving superior convergence rewards. Furthermore, the policies learned with the recovered rewards have been deployed in physical multi-robot systems, further validating the effectiveness of our framework in real-world scenarios.


\section*{Appendixes}
The $Q$-function and $V$-function of a policy $\bm{\pi}$ in MG $\mathcal{M} \cup r$ are defined as:
\begin{align*}
    Q^{\bm{\pi}}_{\mathcal{M} \cup r}(s,\bm{a})=&\gamma\sum_{s^\prime,\bm{a}^\prime}\bm{\pi}(\bm{a}^\prime|s^\prime) P(s^\prime | s,\bm{a}) Q^{\bm{\pi}}_{\mathcal{M} \cup r}(s^\prime,\bm{a}^\prime)\\
    &+r(s,\bm{a}), \\
    V^{\bm{\pi}}_{\mathcal{M} \cup r}(s)=&\sum_{\bm{a}} \bm{\pi}(\bm{a}|s)Q^{\bm{\pi}}_{\mathcal{M} \cup r}(s,\bm{a}).
\end{align*}

If policy $\bm{\pi}_E$ is the optimal policy under reward $r$, then we can derive the following equations:
\begin{align}
    Q^{\bm{\pi}_E}_{\mathcal{M} \cup r}(s,\bm{a}) - V^{\bm{\pi}}_{\mathcal{M} \cup r}(s) = 0, \quad if \ \bm{\pi}_E(\bm{a}|s)>0, \label{eq4} \\
    Q^{\bm{\pi}_E}_{\mathcal{M} \cup r}(s,\bm{a}) - V^{\bm{\pi}}_{\mathcal{M} \cup r}(s) \leq 0, \quad if \ \bm{\pi}_E(\bm{a}|s)=0. \label{eq5}
\end{align}

\begin{lemma}
A $Q$-function satisfies the condition of equations \eqref{eq4} and \eqref{eq5} if and only if there exists $\zeta\in\mathbb{R}^{\mathcal{S}\times \mathcal{S}}_{\geq0}$ and $V\in\mathbb{R}^{\mathcal{S}}$ such that:
\begin{equation*}
    Q^{\bm{\pi}_E}_{\mathcal{M} \cup r}(s,\bm{a})=-\zeta\mathbb{I}_{\{\bm{\pi}_E(\bm{a}|s)=0\}} + V(s).
\end{equation*}
\end{lemma}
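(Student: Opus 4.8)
The plan is to prove the two implications of the biconditional separately, in both directions exploiting the interpretation that the free function $V$ must coincide with the value function $V^{\bm{\pi}_E}_{\mathcal{M} \cup r}$ and that $\zeta$ encodes the non-negative gap $V(s) - Q^{\bm{\pi}_E}_{\mathcal{M} \cup r}(s,\bm{a})$ on actions outside the support of $\bm{\pi}_E$. The whole argument is algebraic and pivots on a single consistency computation, so I would first rewrite the optimality conditions \eqref{eq4}--\eqref{eq5} in terms of the advantage $A(s,\bm{a}) := Q^{\bm{\pi}_E}_{\mathcal{M} \cup r}(s,\bm{a}) - V^{\bm{\pi}_E}_{\mathcal{M} \cup r}(s)$, namely $A(s,\bm{a}) = 0$ whenever $\bm{\pi}_E(\bm{a}|s) > 0$ and $A(s,\bm{a}) \le 0$ whenever $\bm{\pi}_E(\bm{a}|s) = 0$.

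For the sufficiency direction, I would assume the representation $Q^{\bm{\pi}_E}_{\mathcal{M} \cup r}(s,\bm{a}) = -\zeta\,\mathbb{I}_{\{\bm{\pi}_E(\bm{a}|s)=0\}} + V(s)$ with $\zeta \ge 0$ and first establish that the $V$ appearing here is forced to equal $V^{\bm{\pi}_E}_{\mathcal{M} \cup r}$. This is the key step: substituting the representation into the definition $V^{\bm{\pi}_E}_{\mathcal{M} \cup r}(s) = \sum_{\bm{a}} \bm{\pi}_E(\bm{a}|s)\,Q^{\bm{\pi}_E}_{\mathcal{M} \cup r}(s,\bm{a})$, the indicator vanishes on every action with $\bm{\pi}_E(\bm{a}|s) > 0$, while actions with $\bm{\pi}_E(\bm{a}|s) = 0$ contribute nothing to the sum, so the expression collapses to $V(s)\sum_{\bm{a}} \bm{\pi}_E(\bm{a}|s) = V(s)$. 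With this identity in hand, \eqref{eq4} follows immediately (the indicator is zero on the support, giving $Q = V = V^{\bm{\pi}_E}_{\mathcal{M} \cup r}$), and \eqref{eq5} follows from $\zeta \ge 0$ (off the support $Q = V - \zeta \le V = V^{\bm{\pi}_E}_{\mathcal{M} \cup r}$).

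For the necessity direction, I would run the construction in reverse: set $V(s) := V^{\bm{\pi}_E}_{\mathcal{M} \cup r}(s)$ and define $\zeta(s,\bm{a}) := V(s) - Q^{\bm{\pi}_E}_{\mathcal{M} \cup r}(s,\bm{a})$ on actions outside the support (and arbitrarily, say zero, on the support). Condition \eqref{eq4} makes the representation exact on the support, while \eqref{eq5} is exactly what guarantees $\zeta \ge 0$ off the support, so $\zeta$ is the required non-negative function of $(s,\bm{a})$. The only genuine subtlety---and the step I would treat most carefully---is the consistency between the $V$ asserted to exist in the statement and the true value function; this is what the support computation above resolves, and it is where a careless argument could accidentally evaluate against an arbitrary policy's value $V^{\bm{\pi}}$ rather than $V^{\bm{\pi}_E}_{\mathcal{M} \cup r}$. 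I would therefore flag that the $V^{\bm{\pi}}$ appearing in \eqref{eq4}--\eqref{eq5} must be read as $V^{\bm{\pi}_E}_{\mathcal{M} \cup r}$, since it is precisely the optimality of $\bm{\pi}_E$ that licenses stating these greedy conditions against its own value function.
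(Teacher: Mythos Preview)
Your proposal is correct and matches the paper's proof essentially line for line: both directions hinge on the same support computation showing $V^{\bm{\pi}_E}_{\mathcal{M}\cup r}(s)=V(s)$, and the converse constructs $V:=V^{\bm{\pi}_E}_{\mathcal{M}\cup r}$ and $\zeta:=V^{\bm{\pi}_E}_{\mathcal{M}\cup r}(s)-Q^{\bm{\pi}_E}_{\mathcal{M}\cup r}(s,\bm{a})$ exactly as you do. Your remark that the $V^{\bm{\pi}}$ in \eqref{eq4}--\eqref{eq5} must be read as $V^{\bm{\pi}_E}_{\mathcal{M}\cup r}$ is also well taken---it is a typo in the paper's statement.
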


\begin{proof}
We first show that if $Q^{\bm{\pi}_E}_{\mathcal{M} \cup r}(s,\bm{a})$ has this form, then the equations \eqref{eq4} and \eqref{eq5} are satisfied, and then converse. Assuming $Q^{\bm{\pi}_E}_{\mathcal{M} \cup r}(s,\bm{a})=-\zeta\mathbb{I}_{\{\bm{\pi}_E(\bm{a}|s)=0\}} + V(s)$, then we can get $V^{\bm{\pi}_E}_{\mathcal{M} \cup r}(s)=\sum_{\bm{a}} \bm{\pi}_E(\bm{a}|s)Q^{\bm{\pi}_E}_{\mathcal{M} \cup r}(s,\bm{a})=V(s)$. If $\bm{\pi}_E(\bm{a}|s)>0$, then $Q^{\bm{\pi}_E}_{\mathcal{M} \cup r}(s,\bm{a})=V^{\bm{\pi}_E}_{\mathcal{M} \cup r}(s)$, which satisfies equation \eqref{eq4}. If $\bm{\pi}_E(\bm{a}|s)=0$, $Q^{\bm{\pi}_E}_{\mathcal{M} \cup r}(s,\bm{a})=V^{\bm{\pi}_E}_{\mathcal{M} \cup r}(s)-\zeta \leq V^{\bm{\pi}_E}_{\mathcal{M} \cup r}(s)$, which satisfies equation \eqref{eq5}. For the converse, assume equations \eqref{eq4} and \eqref{eq5} hold. We take $V(s)=V^{\bm{\pi}_E}_{\mathcal{M} \cup r}(s)$ and $\zeta=V^{\bm{\pi}_E}_{\mathcal{M} \cup r}(s)-Q^{\bm{\pi}_E}_{\mathcal{M} \cup r}(s,\bm{a}) \geq 0$.
\end{proof}

Then we can give the proof of Lemma \ref{lemma1} as follows.

\begin{proof}
From the definition of $Q$-function and $V$-function, we have:
\begin{equation*}
    Q^{\bm{\pi}}_{\mathcal{M} \cup r}(s,\bm{a})=r(s,\bm{a})+\sum\nolimits_{s^\prime} P(s^\prime | s,\bm{a}) V(s^\prime)
\end{equation*}
Then, we can get:
\begin{align*}
    r(s,\bm{a})&=Q^{\bm{\pi}}_{\mathcal{M} \cup r}(s,\bm{a})-\gamma\sum\nolimits_{s^\prime} P(s^\prime | s,\bm{a}) V(s^\prime) \\
    &=-\zeta\mathbb{I}_{\{\bm{\pi}_E(\bm{a}|s)=0\}} + V(s)-\gamma\sum\nolimits_{s^\prime} P(s^\prime | s,\bm{a}) V(s^\prime)
\end{align*}
\end{proof}

Next, we provide the proof of Proposition \ref{proposition1}.

\begin{proof}
We can rewrite the reward functions $r$ and $\hat{r}$ as:
\begin{align*}
    r(s, \bm{a})&=-\zeta\mathbb{I}_{\{\bm{\pi}_E(\bm{a}|s)=0\}} + V(s)-\gamma\sum\nolimits_{s^\prime} P(s^\prime | s,\bm{a}) V(s^\prime),\\
    \hat{r}(s,\bm{a})&=-\hat{\zeta}\mathbb{I}_{\{\hat{\bm{\pi}}_E(\bm{a}|s)=0\}} + \hat{V}(s)-\gamma\sum\nolimits_{s^\prime} \hat{P}(s^\prime | s,\bm{a}) \hat{V}(s^\prime).
\end{align*}

Without loss of generality, we can choose $V=\hat{V}$ and $\hat{\zeta}=\zeta\mathbb{I}_{\{\bm{\pi}_E(\bm{a}|s)=0\}}$:
\begin{align*}
    r(s, \bm{a})-\hat{r}(s,\bm{a})
    =& \zeta \mathbb{I}_{\{\bm{\pi}_E(\bm{a}|s)=0\}}\mathbb{I}_{\{\hat{\bm{\pi}}(s,\bm{a}) > 0\}} \\
    & + \gamma\sum\nolimits_{s^\prime} V(s^\prime)(P(s^\prime|s,\bm{a})-\hat{P}(s^\prime|s,\bm{a})).
\end{align*}

The results follow by taking the absolute value and applying triangle inequality.
\end{proof}

Finally, we present the proof of Proposition \ref{proposition2}.

\begin{proof}
For $g\in\mathcal{G}$ and $(s,\bm{a}) \in \tau_E^\mathcal{G}$, we have $\hat{\bm{\pi}}_E^\mathcal{G}(g\bm{a}|gs)=\hat{\bm{\pi}}_E^\mathcal{G}(\bm{a}|s)$ and $\hat{P}^\mathcal{G}(gs^\prime|gs,g\bm{a})=\hat{P}^\mathcal{G}(s^\prime|s,\bm{a})$ since $\hat{\mathcal{M}}^\mathcal{G}$ is the $\mathcal{G}$-invariant MG\textbackslash R. For $(s,\bm{a})\in\tau_E$, we have $\hat{\bm{\pi}}_E^\mathcal{G}(\bm{a}|s)=\hat{\bm{\pi}}_E(\bm{a}|s)$ and $\hat{P}^\mathcal{G}(s^\prime|s,\bm{a})=\hat{P}(s^\prime|s,\bm{a})$, where $\hat{P}^\mathcal{G}$ is the transition model formed from $\tau_E^\mathcal{G}$. The Error Propagation of the MIRL problem $\hat{B}^\mathcal{G}=({\hat{\mathcal{M}}^\mathcal{G}, \hat{\bm{\pi}}_E^\mathcal{G}})$ is:
\begin{align*}
    |r(s,\bm{a})-\hat{r}^\mathcal{G}(s,\bm{a})| \leq & \zeta \mathbb{I}_{\{\bm{\pi}_E(a|s)=0\}}\mathbb{I}_{\{\hat{\bm{\pi}}_E^\mathcal{G}(a|s) > 0\}} + \\ &\gamma\sum_{s^\prime} |V(s^\prime)(P(s^\prime|s,a)-\hat{P}^\mathcal{G}(s^\prime|s,a))|.
\end{align*}
We denote the difference between the upper bounds of the two Error Propagation as:
\begin{align*}
    \delta=&\zeta\mathbb{I}_{\{\bm{\pi}(\bm{a}|s)=0\}}(\mathbb{I}_{\{\hat{\bm{\pi}}_E(\bm{a}|s) > 0\}}-\mathbb{I}_{\{\hat{\bm{\pi}}_E^\mathcal{G}(\bm{a}|s) > 0\}})+\\
           &\gamma\sum_{s^\prime} (|\Delta \hat{P} (s^\prime|s,\bm{a})|-|\Delta \hat{P}^\mathcal{G} (s^\prime|s,\bm{a})|)|V(s^\prime)|,
\end{align*}
where $\delta = |r(s,\bm{a})-\hat{r}(s, \bm{a})|-|r(s,\bm{a})-\hat{r}^\mathcal{G}(s,\bm{a})|$, $\Delta \hat{P} =P-\hat{P}$ and $\Delta \hat{P}^\mathcal{G}=P-\hat{P}^\mathcal{G}$. We will discuss the two terms of $\delta$ separately. 

The first term can only be positive if $\bm{\pi}_E(\bm{a}|s) = 0$, so we begin with this condition. When $(s, \bm{a}) \in \tau_E$, since $\bm{\pi}_E(\bm{a}|s) = 0$, there will be no samples of the pair $(s, \bm{a})$ in $\tau_E$, which implies $\hat{\bm{\pi}}_E(\bm{a}|s) = 0$, and similarly $\hat{\pi}_E^\mathcal{G}(\bm{a}|s) = 0$. In this case, the first term is 0. When $(s, \bm{a}) \in \tau_E^\mathcal{G}$ and $(s, \bm{a}) \notin \tau_E$, we have $\hat{\bm{\pi}}_E(\bm{a}|s) > 0$. Based on the $\mathcal{G}$-invariant MG\textbackslash R, there exists a $g \in \mathcal{G}$ such that $(g^{-1}s, g^{-1}\bm{a}) \in \tau_E$. Therefore, $\bm{\pi}_E(g^{-1}s|g^{-1}\bm{a}) = \bm{\pi}_E(s|\bm{a}) = 0$, which leads to $\hat{\bm{\pi}}_E^\mathcal{G}(g^{-1}\bm{a}|g^{-1}s) = \bm{\pi}_E(g^{-1}s|g^{-1}\bm{a}) = 0$. In this case, the first term is positive. Overall, the first term is non-negative.

According to \cite{auer2008near, osband2017posterior}, we can obtain the bounds:
\begin{align*}
    &\forall (s, \bm{a}) \in \tau, |P(s^\prime|s, \bm{a}) - \hat{P}(s^\prime|s, \bm{a})| \leq \frac{C}{\sqrt{D(s, \bm{a})}} < 1, \\
    &\forall (s, \bm{a}) \notin \tau, |P(s^\prime|s, \bm{a}) - \hat{P}(s^\prime|s, \bm{a})| \leq 1,
\end{align*}
where $C$ is a constant and $D(s, \bm{a})$ is the number of of $(s, \bm{a})$ in $\tau$. 
For the second term, when $(s, \bm{a}) \in \tau_E$, the second term is zero. When $(s, \bm{a}) \in \tau_E^\mathcal{G}$ and $(s, \bm{a}) \notin \tau_E$, the upper bound difference of the second term is $1 - \frac{C}{\sqrt{D^\mathcal{G}(s,\bm{a})}} > 0$. Therefore, the second term is non-negative. In summary, $\delta \geq 0$.
\end{proof}

\addtolength{\textheight}{0cm}   

\section*{Acknowledgment}
This work is supported by the fundamental research funds for the central universities.



\bibliographystyle{IEEEtran}
\bibliography{IEEEexample}

\end{document}